\DeclareMathOperator*{\argmax}{arg\,max}
\DeclareMathOperator*{\argmin}{arg\,min}
\newtheorem{theorem}{Theorem}[]
\newtheorem{condition}{Condition}[]
\newtheorem{lemma}[]{Lemma}
\newtheorem{assumption}{Assumption}[]
\newtheorem{definition}{Definition}
\begin{document}

	\title{\fontsize{22.8}{27.6}\selectfont Fast Change Identification in Multi-Play Bandits and its Applications in Wireless Networks}
	\author{Gourab Ghatak, {\it Member, IEEE} 
	\thanks{The author is with the Department of Electrical Engineering at IIT Delhi, India 110016. Email: gghatak@ee.iitd.ac.in.}
    }
\date{}

\maketitle
\vspace{-2cm}		

\begin{abstract}
Next-generation wireless services are characterized by a diverse set of requirements, to sustain which, the wireless access points need to probe the users in the network periodically. In this regard, we study a novel multi-armed bandit (MAB) setting that mandates probing all the arms periodically while keeping track of the best current arm in a non-stationary environment. In particular, we develop \texttt{TS-GE} that balances the regret guarantees of classical Thompson sampling (TS) with the broadcast probing (BP) of all the arms simultaneously in order to actively detect a change in the reward distributions. The main innovation in the algorithm is in identifying the changed arm by an optional subroutine called group exploration (GE) that scales as $\log_2(K)$ for a $K-$armed bandit setting. We characterize the probability of missed detection and the probability of false-alarm in terms of the environment parameters. We highlight the conditions in which the regret guarantee of \texttt{TS-GE} outperforms that of the state-of-the-art algorithms, in particular, \texttt{ADSWITCH} and \texttt{M-UCB}. We demonstrate the efficacy of \texttt{TS-GE} by employing it in two wireless system application - task offloading in mobile-edge computing (MEC) and an industrial internet-of-things (IIoT) network designed for simultaneous wireless information and power transfer (SWIPT).
\end{abstract}
\begin{IEEEkeywords}
Multi-armed bandits, Thompson sampling, Non-stationarity, Online learning.
\end{IEEEkeywords}

\section{Introduction}
Sequential decision making problems in \ac{RL} are popularly formulated using the \ac{MAB} framework, wherein, an agent (or player) selects one or multiple options (called arms) out of a set of arms at each time slot~\cite{9469216, 9113431, slivkins2019introduction, thompson1933likelihood, gittins2011multi}. Each time the player selects an arm or a group of arms, it receives a reward characterized by the reward distribution of the played arm/arms. The player performs consecutive action-selections based on its current estimate of the reward (or that of its distribution) of the arms. The player updates its belief of the played arms based on the reward received. In case the reward distribution of the arms is stationary, several algorithms have been shown to perform optimally~\cite{contal2013parallel}. On the contrary, most real-world applications such as \ac{IoT} networks~\cite{uprety2020reinforcement}, wireless communications~\cite{9767545}, computational advertisement~\cite{huh2020advancing}, and portfolio optimization~\cite{ghatak2021kolmogorovsmirnov} are better characterized by non-stationary rewards. However, non-stationarity in reward distributions are notoriously difficult to handle analytically.

To address non-stationarity, researchers either i) construct passively-adaptive algorithms that are change-point agnostic and work by discounting the impact of past rewards gradually or ii) derive frameworks to actively detect the changes in the environment. Among the actively-adaptive algorithms, the state-of-the-art solutions, e.g., \texttt{ADSWITCH} by Auer {\it et al.}~\cite{auer2019adaptively} provide a regret guarantee of $\mathcal{O}\left(\sqrt{KN_{\rm C}T\log T}\right)$ for a $K$-armed bandit setting experiencing $N_{\rm C}$ changes in a time-horizon $T$. Recently, researchers have also explored predictive sampling frameworks for tackling non-stationarity~\cite{liu2022nonstationary}. There the authors developed an algorithm that deprioritizes investment in acquiring information that will quickly
lose relevance.

Contrary to most non-stationary bandit algorithms, in several applications, the agent can possibly select multiple arms simultaneously. In such cases, the agent may either have access to the individual rewards of the arms played or a function of the rewards. For example, authors in \cite{ahmad2009multi} explored a wireless communication setup where a user can select multiple channels to sense and access simultaneously. Similarly, in portfolio optimization~\cite{liu2012learning} multiple plays refers to investing in multiple financial instruments simultaneously. However, to the best of our knowledge, none of the existing works investigate multiple simultaneous plays under a changing environment.

In contrast to this we propose an algorithm based on grouped probing of the arms that identifies the arm that has undergone a change in its mean. We investigate the conditions under which the proposed algorithm achieves superior regret guarantees than the state-of-the-art algorithms.

\subsection{Related Work}

{\bf Non-stationarity} Actively-adaptive algorithms have been experimentally shown to perform better than the passively-adaptive ones~\cite{mellor2013thompson}. In particular, \texttt{ADAPT-EVE}, detects abrupt changes via the Page-Hinkley statistical test (PHT)~\cite{hartland2006multi}. However, their evaluation is empirical without any regret guarantees. Similarly, another work~\cite{ghatak2021kolmogorovsmirnov} employs the Kolmogorov-Smirnov statistical test to detect a change in distribution of the arms. Interestingly, tests such as the PHT has been applied in different contexts in bandit frameworks, e.g., to adapt the window length of \texttt{SW-UCL}~\cite{srivastava2014surveillance}. The results by \cite{yu2009piecewise, ghatak2020change}, and by Cao {\it et al.}~\cite{cao2019nearly} detect a change in the empirical means of the rewards of the arms by assuming a constant number of changes within an interval. While the algorithm in~\cite{cao2019nearly}, called \texttt{M-UCB} achieves a regret bound of $\mathcal{O}(\sqrt{KN_{\rm C}T\log T})$, the work by Yu~{\it et al.} leverages side-information to achieve a regret of $\mathcal{O}(K\log T)$. However, the proposed algorithms in both these works assume a prior knowledge of either the number of changes or the change frequency. On these lines, recently, Auer {\it et al.}~\cite{auer2019adaptively} have proposed \texttt{ADSWITCH} based on the mean-estimation based checks for all the arms. Remarkably, the authors show regret guarantees of the order of $\mathcal{O}(\sqrt{KN_{\rm C}T\log T})$ for \texttt{ADSWITCH} without any pre-condition on the number of changes $N_{\rm C}$ for the $K$-arm bandit problem. If the number of changes $N_{\rm C}$ is known, a safeguard against a change in an inferior arm $a_i$ can be achieved by sampling it in inverse proportion of its sub-optimality gap $\Delta_i$. This achieves a regret of $\mathcal{O}(\sqrt{KLT})$. However, if the number of changes is not known, setting the sampling rate is challenging and if the number of changes is greater than $\sqrt{T}$, several algorithms experience linear regret. In order to avoid this, the main idea in \texttt{ADSWITCH} is to draw consecutive samples from arms, thereby incurring a regret that scales as $\mathcal{O}\left(\sqrt{K}\right)$ in the worst-case. Nevertheless, since both \texttt{M-UCB} and \texttt{ADSWITCH} provide the same regret guarantees, we choose them as the competitor algorithms for our proposal.

{\bf Multiple Plays:} The paper by Anantharam {\it et al.}~\cite{anantharam1987asymptotically} is one of the first works that introduced the concept of multiple plays. Apart from the work by \cite{ahmad2009multi}, others that have investigated bandits with multiple plays include \cite{xia2016budgeted} which considered budgeted settings for multiple plays with random rewards and cost. In the context of distributed channel access, \cite{youssef2021resource} explored the setting where multiple agents accessing the same arm simultaneously. Its important to note that a multi-player setting is different than a single-player multi-play setting as considered in this paper. Multi-player bandits may lead to collisions which can be resolved in several ways, e.g., equal division of reward. We refer the reader to the work by Besson {\it et al.}~\cite{besson2018multi} for a discussion on this.
For multi-play bandits that include multiple players, the recent work by Wang {\it et al.}~\cite{wang2022multiple} consider a per-load reward distribution and prove a sample complexity lower bound for Gaussian distribution. Nevertheless, none of these existing works treat multiple plays under non-stationarity which can dramatically alter the policy of the player. In particular, a missed-detection due to the change occurring in a less sampled arm can lead to a worst-case linear regret.

{\bf Frequency of Probing:}
Zhou {\it et al.}~\cite{zhou2019joint} have studied a framework for joint status sampling and updating in \ac{IoT} networks. The authors formulated the problem as an infinite horizon average cost constrained Markov decision process. The policy for a single \ac{IoT} device is derived and a trade-off is revealed between the average age of information and the sampling and updating costs.
The work by Stahlbuhk {\it et al.}~\cite{stahlbuhk2021learning} considers a system consisting of a single transceiver pair and a set of communication channels to chose from. They devise a policy to minimize the queue length regret, for which they provide a regret guarantee of the order of $\mathcal{O}(\log T)$.
A rigorous analysis for the regret of age-of-information bandits was presented in~\cite{9559999}, where first it is shown that UCB and Thompson sampling are order-optimal for AoI bandits. Then a novel algorithm is proposed that performs the classical algorithms by making AoI aware decisions. However to the best of our knowledge, none of these works consider the environment to be non-stationary which limits their applicability in modern wireless systems.

\subsection{Motivation and Contribution}
Unlike \texttt{M-UCB} but similar to \texttt{ADSWITCH}, we consider a framework where the number of changes $N_{\rm C}$ is not known a-priori but to be fewer than $\sqrt{T}$. Furthermore, we target an additional requirement - the agent algorithm should guarantee that the age between two consecutive plays of each arm is bounded. Although the issue of age has previously been addressed in some works (e.g., see~\cite{9559999}), the solutions cater to stationary environments. On the contrary, in this work, under an assumption of the hard-core distance between two consecutive changes, we propose \texttt{TS-GE} which outperforms \texttt{ADSWITCH} and \texttt{M-UCB} under several regimes of $K$ and the time-horizon, $T$, while simultaneously satisfying the mandatory probing requirement. The major innovation in this paper is two-fold - i) by allowing simultaneous probing of multiple arms in a coded manner, we reduce the scaling of changed-arm identification form $\mathcal{O}\left(K\right)$ to $\mathcal{O}\left(\log_2(K)\right)$, and ii) by design, \texttt{TS-GE} guarantees that the last sample of each arm is not older than $\sqrt{T}$. Overall, the main contributions of this paper are:
\begin{itemize}
    \item We develop and characterize \texttt{TS-GE}, tuned for non-stationary environments with unknown number of change-points. The additional design guarantee of \texttt{TS-GE} is the periodic mandatory probing of all the arms. Although this is relevant for several applications, to the best of our knowledge, this requirement has not been treated previously in literature. By balancing the regret guarantees of stationary Thompson sampling with grouped probing of all the arms, \texttt{TS-GE} ensures an upper bound of $\sqrt{T}$ in the sampling age of each arm.
    \item We propose a coded grouping of the arms based on the arm indices and consequently, derive the probability of missed detection of change and the probability of false alarm and highlight the conditions to limit these probabilities. Based on this, we show that \texttt{TS-GE} achieves sub-linear regret, $\mathcal{O}\left( K\log T + \sqrt{T}\left[\max\{N_{\rm C}\left(1 + \log K\right), T^{\frac{2}{5}}\}\right]\right)$. We compare this bound with the best known bound of $\mathcal{O}(\sqrt{KN_{\rm C}T\log T})$ and discuss the conditions under which the bound of \texttt{TS-GE} outperforms the latter. A summary of the pros and cons of our proposal is presented in Table~\ref{tab:comp}.
    \item As the first case-study, we employ \texttt{TS-GE} as a strategy for wireless nodes to offload their computational tasks to a set of \ac{MEC} servers. We demonstrate how the grouped probing phase of the algorithm enables the users to keep track of the best server to offload their tasks under a dynamic environment.
    \item Finally, as the second case-study, we consider an \ac{IIoT} network where a central controller is required to sustain \ac{SWIPT} services to the IoT devices. The different phases of \texttt{TS-GE} are mapped to the data-transfer and energy-transfer operations of the network. We demonstrate the performance of \texttt{TS-GE} with respect to the statistical upper-bound derived using stochastic geometry tools. Contrary to our proposal, in other algorithms such as \texttt{M-UCB} the exists non-zero probability that the IoT devices do not receive any energy transfer and hence our proposal will find applications that constrain sample age. 
\end{itemize}
The rest of the paper is organized as follows. In Section~\ref{sec:PS} we describe the system model and the problem setup. The proposed algorithm \texttt{TS-GE} algorithm is presented in Section~\ref{sec:ADF}. The mathematical analysis of missed detection, false alarm, and the regret analysis is presented in Section~\ref{sec:A}. Then, Section~\ref{sec:Reg} discusses the salient features of the derived regret bound. In order to test the efficacy of the proposed framework, we employ \texttt{TS-GE} in two case studies in Section~\ref{sec:CS1} and Section~\ref{sec:CS2}. Finally, the paper concludes in Section~\ref{sec:Con}. The notations used in the paper are summarized in Table~\ref{tab:notations}
\begin{table*}[t]
\small{
\centering
\begin{tabular}{ |c |c |c| }
\hline
\textbf{Feature} & \textbf{This paper} & \textbf{Benchmark~\cite{cao2019nearly, auer2019adaptively}}\\ \hline
Scaling with $K$        & \textcolor{blue}{$\mathcal{O}(\log_2 K)$}  (after initialization)   & \textcolor{red}{$\sqrt{K}$}\\
Guarantee on change-detection latency        & \textcolor{blue}{$\mathcal{O}(\sqrt{K})$}    & \textcolor{red}{No guarantee}\\
Scaling with $T$    & \textcolor{red}{Worst case - $\mathcal{O}(T^{0.9})$}    & \textcolor{blue}{$\mathcal{O}(\sqrt{T\log T})$}\\
Scaling with $N_{\rm C}$    & \textcolor{red}{$\mathcal{O}(N_{\rm C})$}    & \textcolor{blue}{$\mathcal{O}(\sqrt{N_{\rm C}})$}\\
Additional requirement    & Multiple-simultaneous plays    & -\\
\hline
\end{tabular}
}
\caption{Comparison of regret guarantees}
\label{tab:comp}
\end{table*}

\begin{table*}[t]
\centering
\begin{tabular}{ |c |c | }
\hline
\textbf{Symbol} &\textbf{Definition} \\
\hline 
$\mathcal{K}$, $K$ &Set of arms, number of arms. \\
$a_i$, $\mu_i(t)$, $\hat{\mu}_i(t)$ &$i-$th arm, its mean at time $t$, estimate of the mean at $t$\\
$T$ &Time horizon \\
$N_{\rm C}$ &Number of changes \\
$\pi(t)$ &Arm pulled at time $t$\\
$R_{i}(t)$ &Reward of arm $a_i$ at time $t$\\
$R_{\pi}(t)$ &Reward obtained by policy $\pi$ at time $t$\\
$R_{\pi, {\rm B}}(t)$ &Success/Failure event observed by policy $\pi$ at time $t$\\
$p_{\rm C}$ &Probability that an episode experiences a change\\
$p_{\rm b}$ &Probability that one slot experiences a change\\
$\mathcal{R}(T)$ &Regret experienced by a policy at time $T$\\
$T_{\rm l}$ &Length of each episode\\
$T_{\rm TS}$ &Length of each \ac{TS} phase\\
$T_{\rm BP}$ &Length of each \ac{BP} phase\\
$T_{\rm GE}$ &Length of each \ac{GE} phase\\
$\alpha_i, \beta_i$ &TS parameters of arm $a_i$\\
$B_k$ &$k-$th super arm\\
\hline
\end{tabular}
\caption{List of notations}
\label{tab:notations}
\end{table*}

\section{Problem Setup}
\label{sec:PS}
\begin{figure*}
    \centering
    \includegraphics[width = 0.8\textwidth]{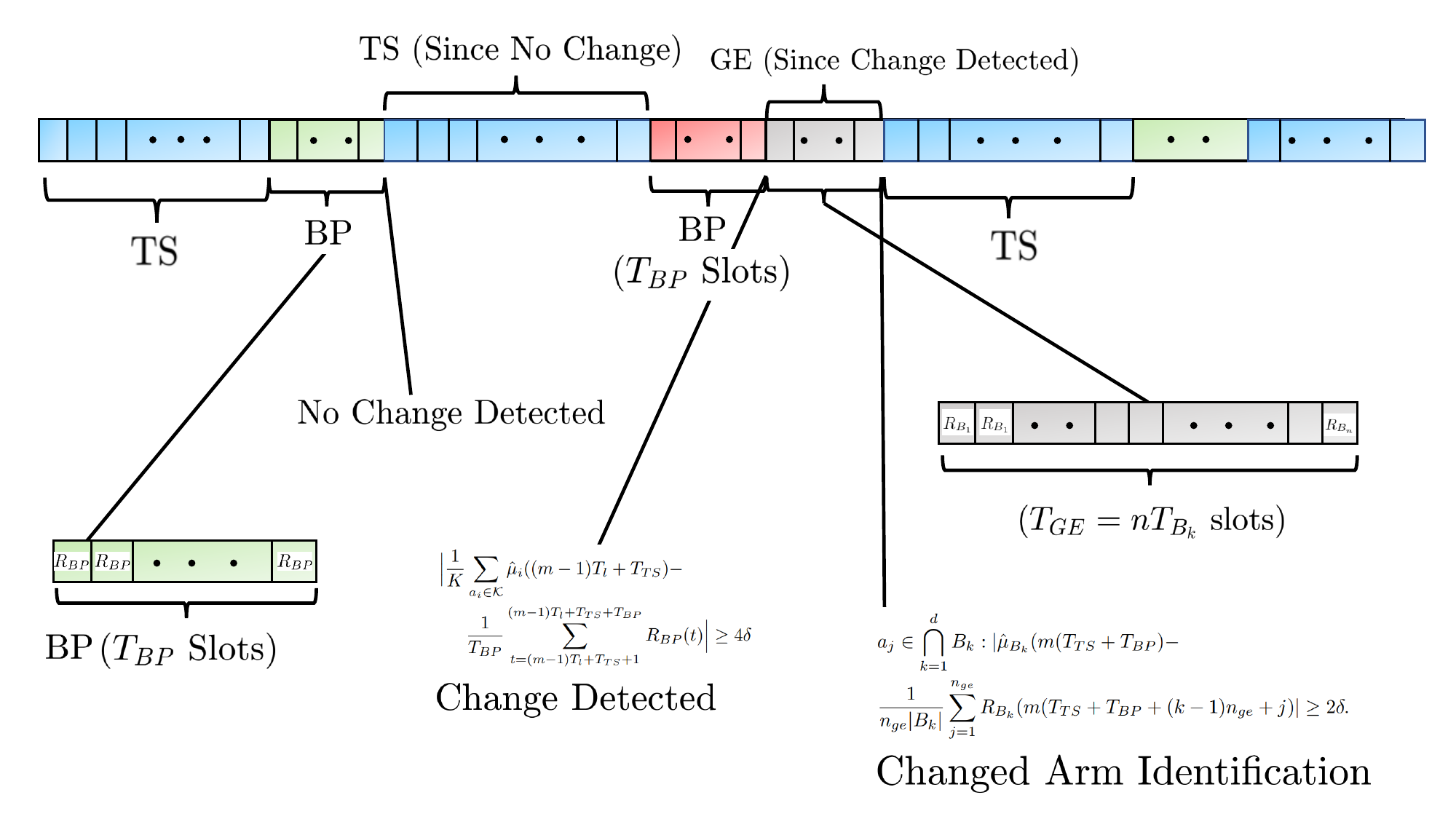}
    \caption{Illustration of the different phases of \texttt{TS-GE} after the initialization with \texttt{ETC} for \texttt{TS-GE}.}
    \label{fig:Illustration}
\end{figure*}
Consider a $K$-arm bandit setting with arms $a_i \in \mathcal{K}$, where $i = 0, 2, \ldots, K-1$. For the discussion in this paper, let us assume that $K = 2^d$ for some $d \in \mathbb{Z}^+$. It may be noted that in case the number of arms is not a power of 2, the same can be transformed into one by adding {\it dummy} arms which are sub-optimal with a probability 1 (e.g., arms with a constant reward of $-\infty$). The policy of the player is denoted by $\pi$. In other words ${\pi}(t) = a_i$ refers to the player pulling arm $a_i$ at time $t$. 

{\bf Reward distribution:} The reward $R_i(t)$ of an arm $a_i$ at time $t$ is assumed to be an instance of a truncated Gaussian random variable $\mathcal{N}\left(\mu_i(t), \sigma^2, R_{\rm max}\right)$ with mean $\mu_i(t)$, variance $\sigma^2$, and truncation limits $0$ and $R_{\rm max}$~\cite{burkardt2014truncated}. This assumption is valid for most practical wireless applications, e.g., upper bound on received power, \ac{SINR}, or data-rate by the wireless nodes. We note that the regret guarantees derived in this paper holds for any sub-Gaussian distribition of the rewards. The assumption of the trunctated Gaussian is 

It is worth to note that according to the above, the variance of all the arms is constant and is the same for all arms\footnote{This assumption is simply due to the ease of notations and the derivation of the regret bounds. The algorithm can be executed with this assumption does not hold. However, the regret analysis for such a case is much more involved and will be investigated in a future work.}, however, the mean is a function of time.

{\bf Success/Failure events:} Furthermore, we assume that in addition to the reward, the player observes a success or failure event $R_{\pi, {\rm B}}(t)$ corresponding to each $R_\pi(t) = R_i(t)$ given that $\pi(t) = a_i$. The value of $R_{\pi, {\rm B}}(t)$ is drawn from a Bernoulli distribution with parameter $\frac{R_\pi(t)}{R_{\rm max}}$. For example, in latency-constrained networks, $R_\pi(t)$ may correspond to the transmission \ac{SINR}, while $R_{\pi, {\rm B}}(t)$ corresponds to the event of successful packet reception which depends not only on $R_{\pi}(t)$ but also on other network parameters such as cell-load. It is precisely this uncertainty that we model using the Bernoulli trial. We note that the agent has access to both $R_\pi(t)$ and $R_{\pi, {\rm B}}(t)$. The availability of the $R_{\pi, {\rm B}}(t)$ with the player naturally leads to the choice of Beta priors in the \ac{TS} phase of our algorithm.

{\bf Multiple plays:} Finally, we assume that at any time-slot, multiple arms can be played by the agent. However, in that case, the agent observes a weighted average of the rewards of the pulled arms. This is formally mentioned below.
\begin{assumption}
In case at any time-slot $t$ the policy $\pi$ pulls multiple (say $n$) arms, i.e., $\pi(t) = \bigcup\limits_{k \in \mathcal{S}} a_k$, then the reward observed by the player is
$
    R_\pi(t) = \frac{1}{n}\sum_{a_k \in \mathcal{S}} R_{a_k}(t).$
\label{assump:multiplay}
\end{assumption}
Note that the player does not have access to the individual arm rewards of the set of arms it has played. This model fits into several application setting, for example, when the bandwidth $B$ is divided among $n$ users equally, the network throughput is $B/n$ times the sum of spectral efficiency experienced by each user. At this stage, we note that such an assumption is not present in~\cite{auer2019adaptively} as highlighted in Table~\ref{tab:comp}.

\subsection{Non-stationarity model}
{\bf Mandatory probing requirement:} The player interacts with the bandit framework in a sequence of $N_{\rm l}$ {\it episodes}, denoted by $E_i$, $i =  1, 2, \ldots, N_{\rm l}$, each of length $T_{\rm l}$. Consequently, the total time-horizon $T$ can be expressed as $T = N_{\rm l} T_{\rm l}$. It is important to note that the episodes are a feature of the problem setting and the framework rather than that of the algorithm. The episodes impose a condition on any feasible policy of the player as stated below.
\begin{condition}
The framework mandates that each arm be probed (either individually or in a group) at least once in each episode.
\end{condition}
This condition corresponds to applications such as mandatory status-updates or timely wireless power transfer in IoT networks. 

{\bf Change model:} We assume a piece-wise stationary environment in which changes in the reward distribution occur at time slots called {\it change points}, denoted by $t_{C_j}$, $j = 1, 2, \ldots$, where each $T_{C_j} \in [T]$. At each change point, exactly one of the arms $a_i$, uniformly selected from $\mathcal{K}$ experiences a change (increase or decrease) in its mean by an unknown amount $\Delta_{{\rm C},i}$.
Furthermore, we assume that during each episode, at most one change point occurs with a probability $p_{\rm C}$ and the total number of changes is $N_{\rm C}$ within $T$, which is unknown to the player. In particular, at in each episode, at the beginning of each time slot, the environment samples a Bernoulli random variable $C$ with success probability $p_{\rm b}$. In case of a success, the change occurs in that slot, while in case of a failure, the bandit framework does not change. Once a change occurs in an episode, the change framework is paused until the next episode. Thus,
\begin{align}
    &\mathbb{P}\left(\text{Episode } E_i \text{ experiences $c_i$ changes}\right)
  = \begin{cases}
p_{\rm C}; &c_i = 1, \nonumber \\
1 - p_{\rm C}; &c_i = 0, \nonumber \\
0; &c_i > 1.
\end{cases}
\end{align}
where, $p_{\rm C} = \sum_{k = 1}^{T_{\rm l}} (1 - p_{\rm b})^{k-1} p_{\rm b}$. As a practical analogy, this change framework can be related to the state of a \ac{MEC} server admitting new tasks at each episode. Let an edge device attempt to offload its task to the \ac{MEC} server at each time slot. The event of successful admission of the task depends on the server state, the channel conditions, the computational requirement, etc. On an offload failure, let the edge device attempt an offload again in the next slot wherein the server state and the computational requirement is assumed to remain same, while the channel state can be considered to be and identical and independent realization of the previous slot. The probability of a successful offload can be assumed to be $p_{\rm b}$. Once such a new task is admitted in the server, its state changes and it freezes further admitting new tasks until the rest of the episode. This will be discussed in detail in Section~\ref{sec:CS2}.



\subsection{Policy Design}
The challenge for the player is to {\it quickly} identify changes that any arm has undergone and adapt its corresponding parameters. For benchmarking the performance of a candidate player policy, at a given time slot, a policy $\pi$ competes against a policy class which selects the arm with the maximum expected reward at that time slot. Thus, any policy $\pi$ that intends to balance between the exploration-exploitation trade-off of the bandit framework experiences a regret 
\begin{align}
    \mathcal{R}(T) = \sum_{t = 1}^T\max_i\mu_{i}(t) -\mathbb{E}\left[\mu_\pi(t) \right],
\end{align}
where $\mu_\pi(t)$ is the mean of the arm $a_\pi(t)$ picked by the policy $\pi$ at time $t$. It can be noted here that unlike stationary environments, the identity of the best arm $a_j$, where, $j = \argmax_i \mu_i(t)$ is not fixed and may change with at each change point.

\section{\texttt{TS-GE}: Algorithm Description and Features}
\label{sec:ADF}

The key features of our proposed algorithm \texttt{TS-GE} are i) actively detecting the change in the bandit framework, ii) identifying the arm which has undergone a change, and iii) modify its probability of getting selected in the further rounds based on the amount of change. The \texttt{TS-GE} algorithm consists of an initialization phase called explore-then-commit, \texttt{ETC}, followed by two alternating phases: classical \ac{TS} phase followed by a \ac{BP} phase to determine a change in the system. In case a change is detected in the \ac{BP} phase, the arm which has undergone a change is identified using an optional sub-routine called \ac{GE}. Thus, the \ac{GE} phase is only triggered if a change is detected in the BP phase. The overall algorithm is illustrated in Fig.~\ref{fig:Illustration} and presented in Algorithm~\ref{alg:main}. Each $E_i$ consists of one \texttt{TS} phase, one \texttt{BP} phase, and (optionally) one \texttt{GE} phase.

\subsection{Initialization: \texttt{ETC} for \texttt{TS-GE}}
For initialization, the player performs a \texttt{ETC} for \texttt{TS-GE}, wherein each arm is played an $n_{\rm ETC}$ number of times and consequently, their mean $\mu_i$ is estimated to be $\hat{\mu}_i$.
\begin{definition}
An arm $a_i$ is defined to be well-localized if the empirical estimate $\hat{\mu}_i$ of its mean $\mu_i$ is bounded as $
    |\hat{\mu}_i(t) - \mu_i(t)| \leq \delta.$
\end{definition}
Let $p_{\rm L}$ be the probability that an arm $i$ is well-localized. We will characterize $n_{\rm ETC}$ and $p_{\rm L}$ at a later stage. After the \texttt{ETC} for \texttt{TS-GE}, the player switches to alternating \ac{TS} and \ac{BP} phases as discussed below.

\subsection{Alternating \ac{TS} and \ac{BP} phases}
Each episode consists of a \ac{TS} phase, a \ac{BP} phase, and an optional \ac{GE} phase. Since in practical scenarios of interest, the time-horizon $T$ can be made arbitrary, let us set $T_{\rm l} = \sqrt{T}$. In the \ac{TS} phase, the player performs the action selection of the choices according to the \ac{TS} algorithm for $T_{\rm TS}$ slots as given in~\cite{agrawal2012analysis}. The version of Agrawal and Goyal~\cite{agrawal2012analysis} can be implemented here due to the fact that the player has access to the binary rewards $R_{\pi, {\rm B}}(t)$.  Each arm $a_i$ is characterized by its \ac{TS} parameters $\alpha_i$ and $\beta_i$, all of which are initially set to one\footnote{Note that the TS parameters can be initialized according to the observed rewards in the \texttt{ETC} phase. However, in order to safeguard against changes in the \texttt{ETC} phase, we begin the TS phase with fresh parameters.}. After each play of an arm $a_i$, its estimated mean $\hat{\mu}_i$ is updated according to the reward from the truncated Gaussian distribution $R_{\pi}(t)$. Let $a_i$ be played at time slots $\{t_i\} \in [T]$ and the number of times it is played is $n_i(t)$ until (and including) the time-slot $t$, then:
\begin{align}
    \hat{\mu}_i(t) = \frac{1}{n_i(t)}\sum_{t \in t_i} R_i(t)
\end{align}
Additionally, the TS parameters for the played arm $a_i$, i.e., $\alpha_i$ and $\beta_i$ are updated as per a Bernoulli trial with a success probability $R_{\pi, {\rm B}}(t), \forall t \in \{t_i\}$ each time $a_i$ is played.

Each \ac{TS} phase is followed by the {BP} phase for $T_{\rm BP}$ time-slots, where the player samples all the arms simultaneously for $T_{\rm BP}$ rounds. During this phase, the reward observed by the player is the average of the rewards from all the arms, i.e., $\pi(t) = \mathcal{K}$, for all the time slots in the BP phase. The reward in the BP phase is then compared with the average of the estimates of all the arms to detect whether an arm of the framework has changed its mean. Recall that as per Assumption~\ref{assump:multiplay}, during the BP phase of the $m$-th episode, the player receives the following reward for each play:
\begin{align}
    R_{BP}(t) 
    \sim \mathcal{N}_{\rm T}\left(\frac{1}{K}\sum_{a_i \in \mathcal{K}} \mu_i(t), \frac{\sigma^2}{K}\right), \nonumber \forall (m-1) T_{\rm l} + T_{\rm TS} < t \leq (m-1) T_{\rm l} + T_{\rm TS} + T_{\rm BP}  \nonumber 
\end{align}
where $\mathcal{N}_{\rm T}\left(x, y\right)$ represents the truncated Gaussian distribution with mean $x$ and variance $y$. At the end of the $m$-th BP phase, a change is detected if:
\begin{align}
    \Big|\frac{1}{K}\sum_{a_i \in \mathcal{K}} \hat{\mu}_i((m-1)T_{\rm l} + T_{\rm TS}) - 
    \frac{1}{T_{\rm BP}}\sum_{t = (m-1)T_{l} + T_{\rm TS} + 1}^{(m-1)T_{\rm l} + T_{\rm TS} + T_{\rm BP}} R_{BP}(t)\Big|  \geq 4\delta 
    \label{eq:BPchange}
\end{align}
Here the first term is the average of the estimated means of all the arms at the end of the $m$-th TS phase, while the second term represents the same evaluated during the $m$-th BP phase. In case the change does not occur or goes undetected, the algorithm continues with the next TS phase. However, in case a change is detected or a false-alarm is generated, the algorithm moves on to the \ac{GE} sub-routine as described below.

\subsection{Policy after change detection}
If a change is detected in the \ac{BP} phase, the \ac{GE} phase begins for the identification of the changed arm. The key step in this phase is the creation of $d$ sets $B_k \subset \mathcal{K}, k = 1, 2, \ldots, d$, called {\it super arms} as shown in Algorithm 2. Recall that $d$ is a number such that $d = \log_2(K)$. It may be noted that an optimal grouping of arms may be derived that considers the fact that the arms that have been played a fewer number of times have a larger error variance of its mean estimate. However, such a study is out of scope for the current text and will be treated in a future work.

{\bf Arm Grouping Strategy:} The $i$-th arm, $i = 0, 1, 2, \ldots, K-1$ is added to a super arm $B_k$, $k = 1, 2, \ldots, d$ if and only if the binary representation of $i$ has a "1" in the $k$-th binary place. In other words, $a_i$ is added to $B_k$ if:
\begin{align}
    \texttt{bin2dec}\left(\texttt{dec2bin}(i) \; \texttt{ AND } \; \texttt{onehot}(k) \right) \neq 0
    \nonumber 
\end{align}
where $\texttt{bin2dec}()$ and $\texttt{dec2bin}()$ are respectively operators that convert binary numbers to decimals and decimal numbers to binary. Additionally, $\texttt{onehot}(k)$ is a binary number with all zeros except 1 at the $k$-th binary position. $\texttt{AND}$ is the bit-wise AND operator. 
This strategy for creating super-arms is inspired from the classical forward error correcting strategy due to Hamming, which enables detection and correction of single-bit errors~\cite{hamming1950error}. 
In the \ac{GE} phase, each super arm is played $n_{\rm ge}$ times and the player obtains a reward which is the average of rewards of all the arms that belong to $B_k$ as per Assumption~\ref{assump:multiplay}, i.e., each time the super arm $B_k$ is played, the player gets a reward that is sampled from the distribution:
\begin{align}
    R_{B_k} \sim \mathcal{N}_{\rm T}\left(\frac{1}{|B_k|} \sum_{i \in B_k} \mu_i, \frac{\sigma^2}{|B_k|}\right) \nonumber 
\end{align}
Let the mean reward of the super arm $B_k$ be denoted by $\mu_{B_k}$. Before the beginning of each GE phase, $\mu_{B_k}$ is estimated. As an example, let a change be detected after the $m$-th BP phase. Then, the estimate of the mean reward of the super arm $B_k$ is:
\begin{align}
    \hat{\mu}_{B_k}(m(T_{\rm TS} + T_{\rm BP})) = \frac{1}{|B_k|} \sum_{a_i \in B_k} \hat{\mu}_i(m(T_{\rm TS} + T_{\rm BP}))
\end{align}

Then, the arm with the changed mean is the one that belongs to the all super arms in which a change of mean is detected. For each super arm $B_k$, the changed arm $a_j$ either is present in $B_k$ or it is present in its complimentary set ${B}_k^{\rm C}$. 
Let us define a new sequence of sets as
\begin{align}
    C_k = \begin{cases}
    B_k; \quad \text{If a change is detected in }B_k \\
    {B}_k^{\rm C}; \quad \text{If no change is detected in }B_k
    \end{cases}
\end{align}
Then, the changed arm is identified as $a_j$, where $a_j = \bigcup C_k$. In case no change is detected in any of the super arms, but the \ac{BP} phase detects a change, the changed arm is identified as $a_1$.

\begin{figure}
    \centering
    \includegraphics[width = 0.45\textwidth]{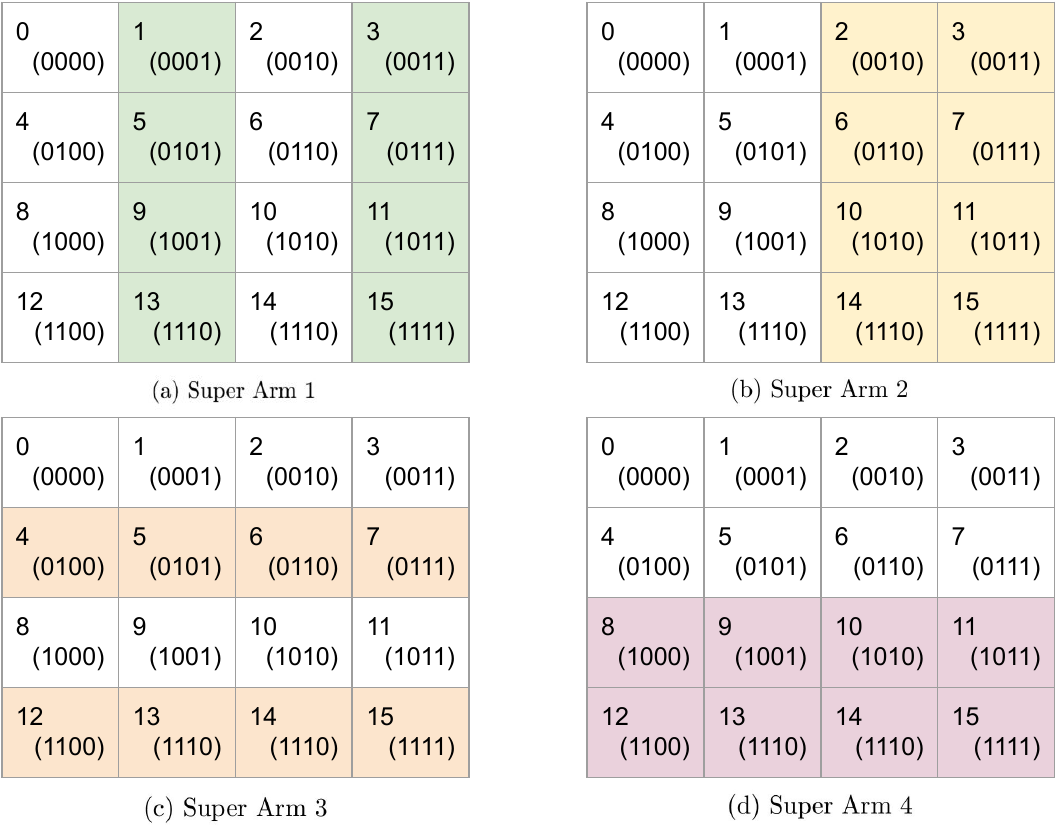}
    \caption{Illustrations of the for super arms for the case of $K = 16$. Here we have $d = 4$ and accordingly, 4 super arms. Each cell corresponds to an arm $a_i$ and shows the decimal and binary representation of $i$.}
    \label{fig:superarms}
\end{figure}
\subsection{Illustrative Example}
Let us elaborate this further with the example of an illustrative example with $K = 16$. Fig.~\ref{fig:superarms} shows the following grouping for the super arms -
\begin{itemize}
    \item $B_1$ - Arms with '1' in the first decimal place - $a_1, a_3, a_3, a_5, a_7, a_9, a_9, a_{11}, a_{13},$ and $a_{15}$.
    \item $B_2$ - Arms with '1' in the second decimal place - $a_2, a_3, a_6, a_7, a_{10}, a_{11}, a_{14},$ and $a_{15}$.
    \item $B_3$ - Arms with '1' in the third decimal place - $a_4, a_5, a_6, a_7, a_{12}, a_{13}, a_{14},$ and $a_{15}$.
    \item $B_4$ - Arms with '1' in the fourth decimal place - $a_8, a_9, a_{10}, a_{11}, a_{12}, a_{13}, a_{14},$ and $a_{15}$.
\end{itemize}
Now, in case a change occurs in an arm, say arm $a_{10}$, this will be detected in super arms $B_2$ and $B_4$ but not in $B_1$ and $B_3$. Additionally, this combination of changes in the super arms uniquely corresponds to the arm $a_{10}$. Similarly, a change only in the super arm $B_3$ and in no other super arms corresponds to a change in arm $a_3$. Finally, note that no changes in any of the super arms (but a change detected in the BP phase) corresponds to a change in $a_1$.

Once the change is detected and the changed arm $a_j$ is identified, the corresponding mean of $a_j$ is updated as
\begin{align}
    \hat{\mu}_j = \sum_{k: a_j \in B_k} \hat{\mu}_{B_k} - \sum_{k: a_j \in B_k} \sum_{a_i \in B_k: i\neq j} \hat{\mu}_i
\end{align}
Furthermore, the TS parameters of the arm is updated. In particular, we set the parameters of $a_j$ to be same as the arm that has an estimated mean closest ot $a_j$ as:
\begin{align}
    \alpha_j = \alpha_k, \quad \beta_j = \beta_k, \quad \text{where } k = \argmin_{i \neq j} |\hat{\mu}_i -  \hat{\mu}_j| \nonumber 
\end{align}

In the next section we characterize the probability with which \texttt{TS-GE} misses detection a change or raises a false alarm in case of no change. This eventually leads to the regret.
\begin{algorithm}
\caption{\texttt{TS-GE}}
\begin{algorithmic}[1]
\State {\bf Parameters:}
\State {\bf Initialization:} $\alpha_k = \beta_k = 1$, $\forall k = 1, \ldots K$. \\
\vspace{0.1cm}
{\bf Thompson Sampling Phase:}
\For{$e_i = E_1, \ldots, E_{N_{\rm l}}$}
\For{$t = 1, \ldots, T_N$}
    \State $\theta_i \sim \text{Beta}\left(\alpha_i, \beta_i\right)$. \qquad $\backslash \backslash$ Sample the Beta prior.
        \State $a_j \gets a_i | \theta_j = \max(\theta_i)$ \quad $\backslash \backslash$ Select the best arm.
        \State $R_{\texttt{TS-GE}}(t) \gets R_{a_j}(t)$ \quad $\backslash \backslash$ Reward at time $t$.
        \State $R_\pi(t) \gets   \frac{R_{a_j}(t)}{R_{\rm max}}$ \quad $\backslash \backslash$ Normalize for Beta update.
        \State $R^*$ = Bern $(R_\pi(t))$ \quad $\backslash \backslash$ Bernoulli trial for Beta update.
        \State $\alpha_j  \gets \alpha_j + 1 - R^*$ \quad $\backslash \backslash$ Update priors.
        \State $\beta_j  \gets \beta_j + R^*$ \quad $\backslash \backslash$ Update priors.
        \State $n_j \gets n_j + 1$ \quad $\backslash \backslash$ Count of arm $a_j$.
        \State $\hat{\mu}_j(t) = \frac{\sum_{s = 1}^{t} R_{a_j}(s) \mathcal{I}(a_j(s))}{n_j}$ \quad $\backslash \backslash$ Update estimated mean of $a_j$.
\EndFor
\State $p \gets p + 1$. \qquad $\backslash\backslash$End of the $p$-th TS phase. \\
\vspace{0.1cm}
{\bf Broadcast Probing Phase:}
\State Play all the arms simultaneously for $T_{\rm BP}$ rounds and build the estimate:
\begin{align}
  \hat{\mu}_{BP}  = \frac{1}{T_{\rm BP}}\sum_{t = (e_i-1)T_{l} + T_{\rm TS} + 1}^{(e_i-1)T_{\rm l} + T_{\rm TS} + T_{\rm BP}} R_{BP}(t) \nonumber 
\end{align}
\If{Equation \eqref{eq:BPchange} holds}
    \State Change is detected.\\
    {\bf Group Exploration Phase:}
\State Construct super-arms $\{B_k\} = \texttt{CSA}({\bf a})$.
\For{$T_{\rm GE}$ slots} 
    \State Play $B_k$ for $T_{B_k}$ rounds.
    \State $\backslash\backslash$ Update $\mu_{B_k}$:
    \begin{align}
       \hat{\mu}_{B_k}(e_i(T_{\rm TS} + T_{\rm BP})) = \frac{1}{|B_k|} \sum_{a_i \in B_k} \hat{\mu}_i(e_i(T_{\rm TS} + T_{\rm BP})) \nonumber 
    \end{align}
\EndFor
    \State  $a_j \in \bigcap\limits_{k = 1}^n B_k : |\hat{\mu}_{B_k} - \frac{1}{n} \sum_{i \in B_k} \hat{\mu}_i(pT_N)| \geq \Delta, k = 1, 2, \ldots, n.$, \quad $\backslash\backslash$ Identify changed arm.
    \State $\hat{\mu}_j(e_iT_{\rm l} + 1) = \sum_{k: a_j \in B_k} \hat{\mu}_{B_k} - \sum_{k: a_j \in B_k} \sum_{a_i \in B_k: i\neq j} \hat{\mu}_i$ \qquad $\backslash\backslash$ Update the changed arm.
    \State Update the Beta parameters of the changed arm:
    \begin{align}
    &\alpha_j = \alpha_k, \quad \beta_j = \beta_k \nonumber 
    \\ \text{where } &k = \argmin_i |\hat{\mu}_i -  \hat{\mu}_j| \nonumber 
\end{align}
    \Else
        \State Continue. \qquad $\backslash\backslash$ When no change is detected
\EndIf
\EndFor
\end{algorithmic}
\label{alg:main}
\end{algorithm}

\begin{algorithm}
\caption{Construct Super-Arms \texttt{CSA}}\label{alg:cap}
\begin{algorithmic}
\State {\bf Input:} ${\bf a}$ and ${\bf n}$.
\State {\bf Initialize:} $B_k = \{\}$, $\forall k = 1, 2, \ldots, K$.
\For{$k = 1$ to $n$}
    \For{$i = 0$ to $K - 1$}
        \If{\texttt{dec2bin}$(i)$ \texttt{AND} \texttt{onehot}$(k) \neq \texttt{zeros}(1,n)$}
            \State $B_k = B_k \cup a_i$
        \EndIf
    \EndFor
\EndFor
\State Return $B_k$
\end{algorithmic}
\end{algorithm}

\section{Analysis of \texttt{TS-GE}}
\label{sec:A}
In this section we derive some conditional mathematical results for \texttt{TS-GE}. In particular, under conditions on the magnitude of change and a lower bound on the probability of change in each slot, we derive a sub-linear regret for \texttt{TS-GE}. First, let us note the following for the \texttt{ETC} phase.
\begin{lemma}
In the stationary regime, in order for the arm $a_i$ to be well-localized with a probability $1 - p_{\rm L}$, the arm needs to have been played at least $n_{\rm ETC}$ times, where
\begin{align}
    n_{\rm ETC} = \frac{R_{\rm max}}{2\delta^2}\ln \frac{1}{p_{\rm L}} \nonumber
\end{align}
\end{lemma}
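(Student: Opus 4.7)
The plan is to recognize this as a straightforward concentration-of-measure statement and reduce it to an application of Hoeffding's inequality. Since the \texttt{TS} phase of the algorithm operates on the Bernoulli success observations $R_\pi(t) = R_{a_i}(t)/r_{\max} \in [0,1]$ (these are the samples fed into the Beta posteriors), the natural object to concentrate is the empirical mean $\hat\mu_i(t) = \tfrac{1}{n}\sum_{s=1}^{n} R_\pi^{(s)}$ of $n$ i.i.d.\ $[0,1]$-bounded draws from arm $a_i$ during the stationary \texttt{ETC} phase.

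First, I would invoke Hoeffding's inequality for i.i.d.\ bounded random variables on $[0,1]$:
\begin{equation*}
\mathbb{P}\bigl(|\hat\mu_i(t) - \mu_i(t)| > \delta\bigr) \;\leq\; 2\exp\bigl(-2n\delta^2\bigr).
\end{equation*}
The i.i.d.\ assumption is legitimate here because in the \texttt{ETC} phase we are in the stationary regime (no change-point has yet occurred by hypothesis), so every pull of arm $a_i$ produces an independent draw from the same Bernoulli with parameter $\mu_i(t)/r_{\max}$ (up to rescaling, which does not affect the concentration bound for $[0,1]$-valued observations).

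Next, I would require this failure probability to be at most $p_L$, i.e.
\begin{equation*}
2\exp(-2n\delta^2) \;\leq\; p_L,
\end{equation*}
and solve for the smallest admissible $n$, obtaining $n \geq \tfrac{1}{2\delta^2}\ln(2/p_L)$. To recover exactly the stated form $n_{ETC} = \tfrac{1}{2\delta^2}\ln(1/p_L)$, I would either (i) note that the authors are implicitly using the one-sided Hoeffding bound (which drops the factor of $2$ in front of the exponential and is the relevant tail when tracking whether $\hat\mu_i$ has moved in a particular direction away from $\mu_i$), or (ii) absorb the $\ln 2$ constant into the definition of $\delta$ with no loss of generality. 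Either reading yields the stated closed form.

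The main obstacle is essentially bookkeeping rather than mathematics: reconciling the precise constant in the logarithm with the two-sided vs.\ one-sided version of Hoeffding, and being explicit that the lemma's hypothesis of a ``stationary regime'' is exactly what licenses the i.i.d.\ reduction. A secondary point worth a sentence of justification is why we concentrate on the $[0,1]$-valued $R_\pi$ sequence rather than the Gaussian $R_{a_i}$; this matters because a sub-Gaussian tail bound with variance $\sigma^2$ would introduce a factor of $\sigma^2$ into $n_{ETC}$, which is absent from the stated formula, confirming that the empirical mean in question is the one built from the Bernoulli success observations.
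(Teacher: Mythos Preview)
Your approach is correct and matches the paper's own proof, which consists of the single sentence ``The proof follows from Hoeffding's inequality.'' Your additional bookkeeping---the one-sided versus two-sided constant and the question of which $[0,1]$-valued sequence is being concentrated---already goes beyond what the paper supplies; note, however, that the paper defines $\hat\mu_i$ as the average of the raw rewards $R_{a_i}(t)$ rather than of the Bernoulli observations $R_\pi$, so the absence of $\sigma^2$ (or $R_{\max}^2$) in the stated $n_{ETC}$ is a looseness in the paper itself rather than something your argument needs to explain away.
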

\begin{proof}
The proof follows from Hoeffding's inequality for bounded random variables between 0 and $R_{\rm max}$.
\end{proof}
Thus, the \texttt{ETC} phase lasts for at least $T_{\rm ETC} = K n_{\rm ETC} = \frac{KR_{\rm max}}{2\delta^2}\ln \frac{1}{p_{\rm L}}$ rounds. Naturally, in order to restrict $p_{\rm L}$ to $\mathcal{O}(\frac{1}{T})$, $n_{\rm ETC}$ needs to be $\mathcal{O}(\ln T)$. Then let us recall that the GE phase is triggered only if a change is detected in the BP phase. Consequently, the algorithm can miss detecting the chase in case the change occurs either in the TS phase or the BP phase, which we analyze below.

\subsection{Probability of Missed Detection}
Since the only condition on $T_{\rm TS}$ is that it has to be upper bounded by $T_{\rm l} = \sqrt{T}$, let us set $T_{\rm TS} = \sqrt{T} - T^{\frac{2}{5}}$. Let the change occur in the arm $a_i$ at $t_c$ time slots in the $m$-th TS phase, i.e., $T_{\rm ETC} + (m-1)(T_{\rm TS} + T_{\rm BP}) < t_c \leq T_{\rm ETC} + mT_{\rm TS} + (m-1)T_{\rm BP}$. The mean is assumed to change from $\mu_i^-$ to $\mu_i^+$. In other words, the distribution of the reward of $a_i$ is given as:
\begin{align}
    &R_i(t) \sim \nonumber 
    \begin{cases}
    X_i^- \sim  \mathcal{N}_{\rm T}\left(\mu_i^-, \sigma^2\right); \; \text{before change} \nonumber \\
    X_i^+ \sim \mathcal{N}_{\rm T}\left(\mu_i^+, \sigma^2\right); \; \text{after change}
    \end{cases}
\end{align}
The following lemma characterizes the probability of missed detection when the change occurs in the TS phase.

\begin{lemma}
Let the arm $a_i$ change its mean from $\mu_i^-$ to $\mu_i^+$, where $\Delta_{{\rm C},i} = \mu_i^+ - \mu_i^-$ at a time slot $t_c$ in the $m$-th TS phase. Then, if $\Delta_{{\rm C},i} \geq 2\sigma$, the probability of missed detection after the $m$-th BP phase following this change is upper bounded by
 $   \mathcal{P}_{\rm M}^{\rm TS} \leq \frac{1}{T}.$
\label{lem:PMTS}
\end{lemma}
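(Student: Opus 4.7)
The plan is to interpret the BP test in \eqref{eq:BPchange} as a two-sample mean-difference test and to bound its type-II error by Gaussian concentration.

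First, let $\hat{\mu}_{\mathrm{avg}}^{(m)} := \frac{1}{K}\sum_{j=1}^{K}\hat{\mu}_j((m-1)T_l + T_{TS})$ denote the average of the stored arm-means at the end of the $m$-th TS phase and $\hat{\mu}_{BP}^{(m)}$ the empirical mean of the $T_{BP}$ rewards collected in the $m$-th BP phase. Setting $Z_m := \hat{\mu}_{\mathrm{avg}}^{(m)} - \hat{\mu}_{BP}^{(m)}$, the missed-detection event is $\mathcal{M} = \{|Z_m| < 4\delta\}$. By Assumption~\ref{assump:multiplay} the BP rewards are i.i.d.\ $\mathcal{N}(\tfrac{1}{K}\sum_j \mu_j^{\mathrm{new}}, \sigma^2/K)$, where only $\mu_i^{\mathrm{new}} = \mu_i^+$ differs from its pre-change counterpart. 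Let $n_i^{(-)}$ and $n_i^{(+)}$ count the pulls of $a_i$ before and after $t_c$ within the $m$-th TS phase, and $n_i = n_i^{(-)} + n_i^{(+)}$. A direct computation gives $\mathbb{E}[Z_m] = \tfrac{n_i^{(-)}}{K n_i}(\mu_i^- - \mu_i^+)$, whose magnitude is at least $\tfrac{2\sigma}{K}\cdot\tfrac{n_i^{(-)}}{n_i}$ by Assumption~\ref{ass:Delta}.

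Next I would decompose $Z_m - \mathbb{E}[Z_m]$ into the fluctuations of the stored estimators $\{\hat{\mu}_j\}$ and the Gaussian fluctuation of $\hat{\mu}_{BP}^{(m)}$. Each $\hat{\mu}_j$ is well-localized within $\delta$ with probability at least $1 - p_L$ by the preceding lemma; choosing $p_L = 1/(KT)$ and a union bound gives $|\tfrac{1}{K}\sum_j (\hat{\mu}_j - \mathbb{E}\hat{\mu}_j)| \leq \delta$ except with probability at most $1/T$. The Gaussian tail bounds the BP part by
\begin{align}
\Pr\bigl(|\hat{\mu}_{BP}^{(m)} - \mathbb{E}\hat{\mu}_{BP}^{(m)}| \geq \eta\bigr) \leq 2\exp\!\left(-\tfrac{\eta^2 K T_{BP}}{2\sigma^2}\right). \nonumber
\end{align}
Choosing $\eta$ such that $|\mathbb{E}[Z_m]| - \eta - \delta \geq 4\delta$, substituting $T_{BP} = T^{2/5}$, and invoking Assumption~\ref{ass:Delta} yields the target $1/T$ bound on $\Pr(\mathcal{M})$.

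The main obstacle is the edge regime where $t_c$ sits near the start of the $m$-th TS phase, so that $n_i^{(+)} \gg n_i^{(-)}$ and $|\mathbb{E}[Z_m]|$ can drop below $4\delta$. There, $\hat{\mu}_i$ itself has tracked $\mu_i^+$ over the post-change portion of the TS phase, so the BP test correctly reports ``no discrepancy''---the event is not a spurious error but an absorbed change. I would bound the probability of landing in this regime by $O(1/T)$ using the uniform law on $t_c$ together with the non-stationarity model of Assumption~\ref{ass:pC}, and combine the two regimes via a union bound to conclude $\mathcal{P}_M^{TS} \leq 1/T$.
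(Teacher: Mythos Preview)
Your overall plan---Gaussian concentration of the BP test statistic---matches the paper's, but the execution diverges in two places, one of which is a genuine gap.

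First, the paper does not separate the fluctuations of the stored estimates from the BP average via well-localization. Instead it isolates the contribution of the changed arm $a_i$ (showing the unchanged arms' contributions cancel with probability $1 - \mathcal{O}(1/T^2)$), writes the reduced statistic $Z'_{TS}$ as a \emph{single} Gaussian with variance $\tfrac{\sigma^2}{K^2}\bigl(\tfrac{1}{n_{ETC}} + \tfrac{1}{t_i^- + t_i^+} + \tfrac{1}{T_{BP}}\bigr)$, and bounds the missed-detection probability by one $\mathcal{Q}$-function tail. The $1/K$ factor in the standard deviation, the AM--HM inequality applied to the variance, and $T_{BP} = T^{2/5}$ together push the $\mathcal{Q}$-argument to order $K\sqrt{T^{2/5}}$, from which $\mathcal{Q}(\cdot) \le 1/T$ follows via Assumption~\ref{ass:Delta}. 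Your well-localization route forfeits this $1/K$ leverage: it only delivers constant-$\delta$ accuracy on each stored mean, forcing you to require $|\mathbb{E}[Z_m]| \gtrsim 5\delta$ and hence the edge-regime split.

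Second---and this is the gap---your edge-regime argument does not hold. You invoke Assumption~\ref{ass:pC} and a ``uniform law on $t_c$'' to claim the regime $n_i^{(+)} \gg n_i^{(-)}$ has probability $\mathcal{O}(1/T)$. But $t_c$ is not uniform; conditional on a change it is geometric with parameter $p_b$, and Assumption~\ref{ass:pC} is a \emph{lower} bound on $p_b$, which makes early changes \emph{more} likely, not less. Worse, $n_i^{(-)}/n_i$ is governed by how often TS selects $a_i$, not by $t_c$ alone: if the change turns $a_i$ into the best arm, TS will pull it $\Theta(T_{TS})$ times post-change regardless of where $t_c$ falls, driving $|\mathbb{E}[Z_m]|$ to $\mathcal{O}\bigl(\Delta_i^C/(K\sqrt{T})\bigr)$, far below $4\delta$. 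The paper never partitions on $t_c$ in the TS-phase lemma---that device is reserved for the BP-phase case in Lemma~\ref{lem:PMBP}, and Assumption~\ref{ass:pC} is invoked only there---and instead relies on the direct $\mathcal{Q}$-bound with the signal taken to be $\Delta_i^C - 2\delta$. Your route would need a fundamentally different argument to close this regime.
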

\begin{proof}
Please see Appendix~\ref{app:MissTS}.
\end{proof}
Note that detecting relatively small changes is not only challenging but also becomes unnecessary in several practical applications. Accordingly, we assume a minimum quantum of change once such a change event occurs. The above result bounds the probability of missed detection in the \ac{TS} phase to inversely proportional to T. However, if the change occurs in the \ac{BP} phase, the probability of missed detection increases as discussed below.
\begin{lemma}
Let the arm $a_i$ change its mean from $\mu_i^-$ to $\mu_i^+$, where $\Delta_{{\rm C},i}$ at a time slot $t_c$ in the $m$-th BP phase. Furthermore, let $p_{\rm b}$ be lower-bounded as
\begin{align}
    p_{\rm b} \geq 1 - \left(\frac{1}{T}\right)^{\frac{1}{\sqrt{T} - T^{\frac{2}{5}}}}.\label{ass:pC}
\end{align}
Then the probability of missed detection after the $m$-th BP phase following this change has the following characteristic:
\begin{align}
    \mathcal{P}_{\rm M}^{\rm BP} = \begin{cases}
     \mathcal{P}_{M,Case 1}^{\rm BP}; & \text{with probability } > 1- \frac{1}{T} \\
     \mathcal{P}_{M,Case 2}^{\rm BP}; & \text{with probability} \leq \frac{1}{T},
    \end{cases}
\end{align}
where, $\mathcal{P}_{M,Case 1}^{\rm BP} \leq \frac{1}{T}$ and 
$\mathcal{P}_{M,Case 2}^{\rm BP} >  1 - \frac{1}{T}$.
\label{lem:PMBP}
\end{lemma}
\begin{proof}
Please see Appendix~\ref{app:MissBP}.
\end{proof}

First, let us note that since the right hand side of \eqref{ass:pC} is a decreasing function, for large values of $T$, it is a fairly mild assumption. This lemma shows that a change in the \ac{BP} phase results in a different probability of missed detection based on the exact point of change. In particular, from the proof we note that for cases 2 and 4, the  algorithm misses the detection of the change in the \ac{BP} phase with a high probability. However, these cases themselves occur with low probability due to the condition \eqref{ass:pC}, which enables us to bound the probability of missed detection. 

\subsection{Probability of False Alarm}
Another aspect of the algorithm that needs to be considered for a regret analysis is the fact that the BP phase can raise a false alarm when a change has not occurred in an episode while, the condition \eqref{eq:BPchange} holds true simultaneously. However, in case of no change, the test statistic is simply
    $Z_{\rm NC} \sim \mathcal{N}_{\rm T}\left(0,\sigma_{NC}\right)$,
where $\sigma^2_{NC} =  \frac{\sigma^2}{K}\left(\frac{1}{n_{\rm ETC}} + \frac{1}{mT_{\rm BP}} + \sum_{a_j \in \mathcal{K}} \frac{1}{n_j(m(T_{\rm TS} + T_{\rm BP}))} \right)$. Here $n_j(mT_{\rm l})$ is the number of times the arm $a_j$ has been played in all the TS phases. Thus,
\begin{align}
    \mathcal{P}_{FA} &= \mathbb{P}\left(|Z_{\rm NC}| \geq 4\delta\right) \leq \mathcal{Q}\left(\frac{4\delta}{{\sigma_{NC}}}\right) \leq \frac{1}{T}
    \label{eq:pfa}
\end{align}
\subsection{On the Regret of \texttt{TS-GE}}
Now we have all the necessary results to derive the following regret bound for \texttt{TS-GE}.

\begin{theorem}
The regret for \texttt{TS-GE} under assumption $\Delta_{\rm C} \geq2 \sigma$ and \ref{ass:pC} is upper bounded as
\begin{align}
   \mathcal{R}_\texttt{TS-GE} \leq\mathcal{O}\left( K\log T + \sqrt{T}\left[\max\{N_{\rm C}\left(1 + \log K\right), T^{\frac{2}{5}}\}\right]\right)
\end{align}
\label{theo:Regret}
\end{theorem}
\begin{IEEEproof}
Please see Appendix~\ref{app:Regret}
\end{IEEEproof}
Thus, not only the regret of \texttt{TS-GE} is sub-linear but also as discussed in the next section, it outperforms the known bounds under several regimes. Additionally, let us note that in most wireless system applications, the initialization phase occurs only once throughout a service session, e.g., initial access for wireless nodes or initial server allocation for edge devices. Hence, from a practical standpoint, the first term of the regret which scales linearly with $K$ does not impact long-term deployment of the algorithm.

\section{Discussion on Regret}
\label{sec:Reg}

\begin{figure*} 
    \centering
  \subfloat[\footnotesize $K = 100$.\label{fig:comparison_small}]{%
       \includegraphics[width=0.32\textwidth, height = 0.18\textheight]{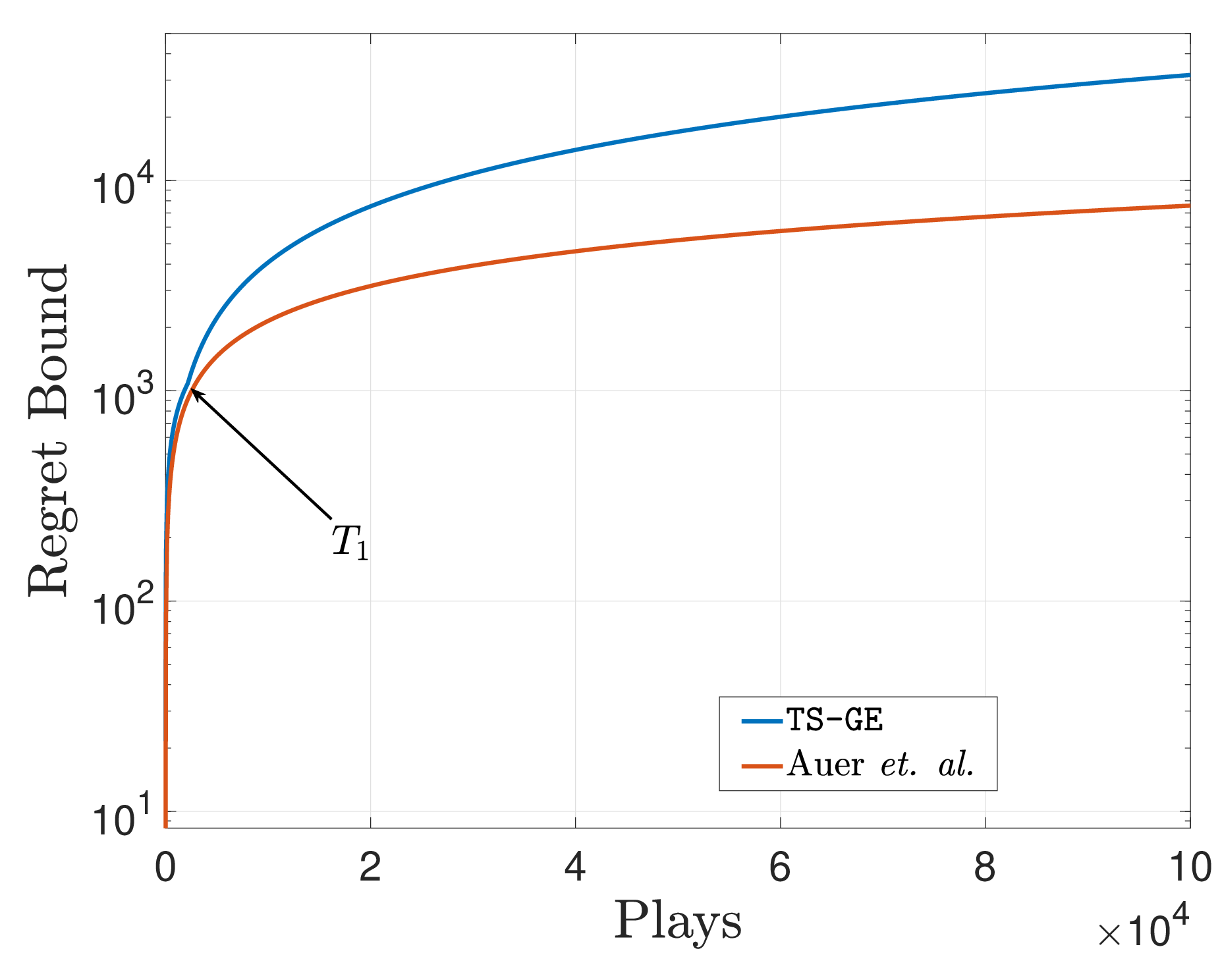}}
  \subfloat[\footnotesize$K = 500$. \label{fig:comparison_mid}]{%
        \includegraphics[width=0.32\textwidth, height = 0.18\textheight]{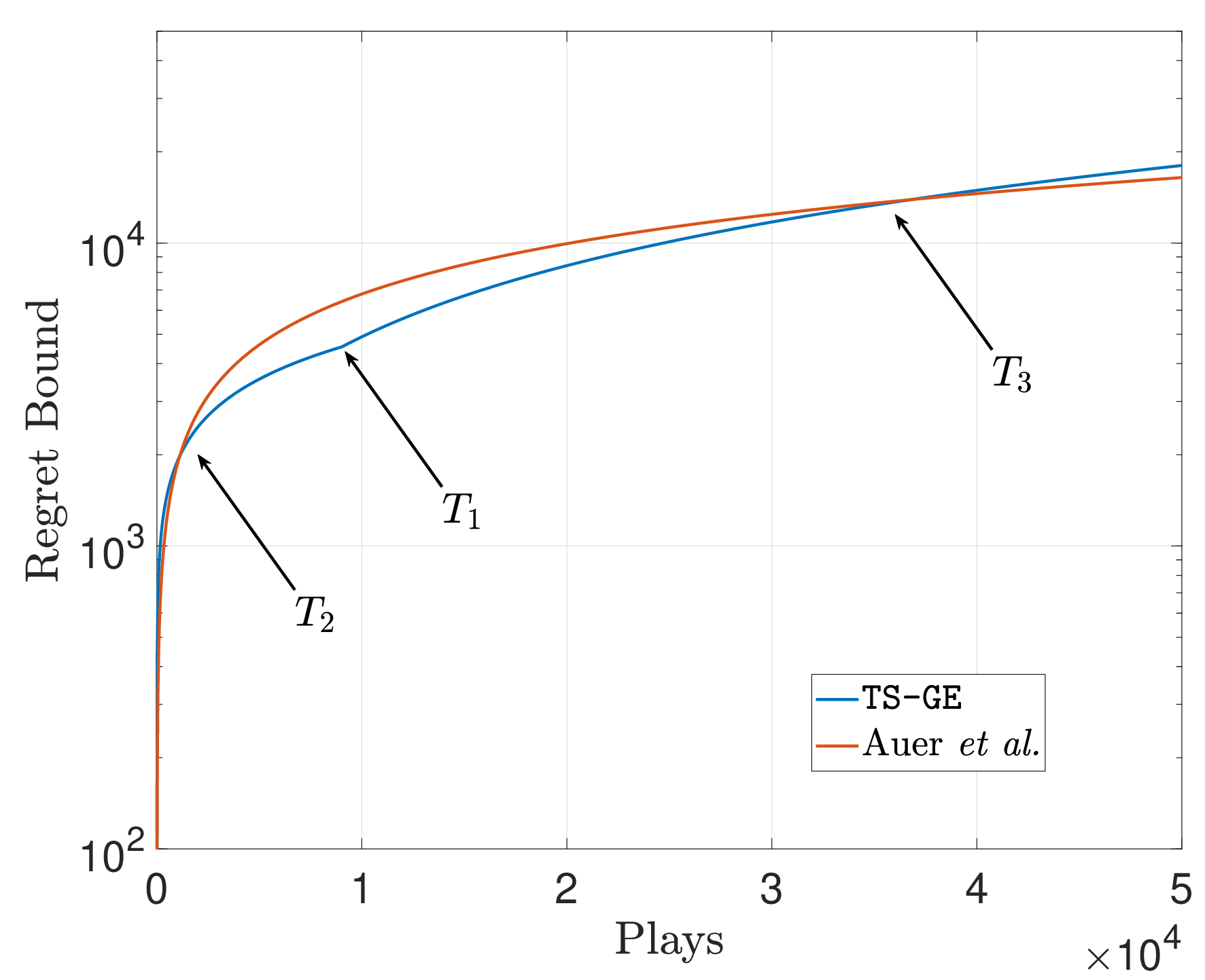}}
 \subfloat[\footnotesize $K = 1000$.\label{fig:comparison_large}]{%
        \includegraphics[width=0.32\textwidth, height = 0.18\textheight]{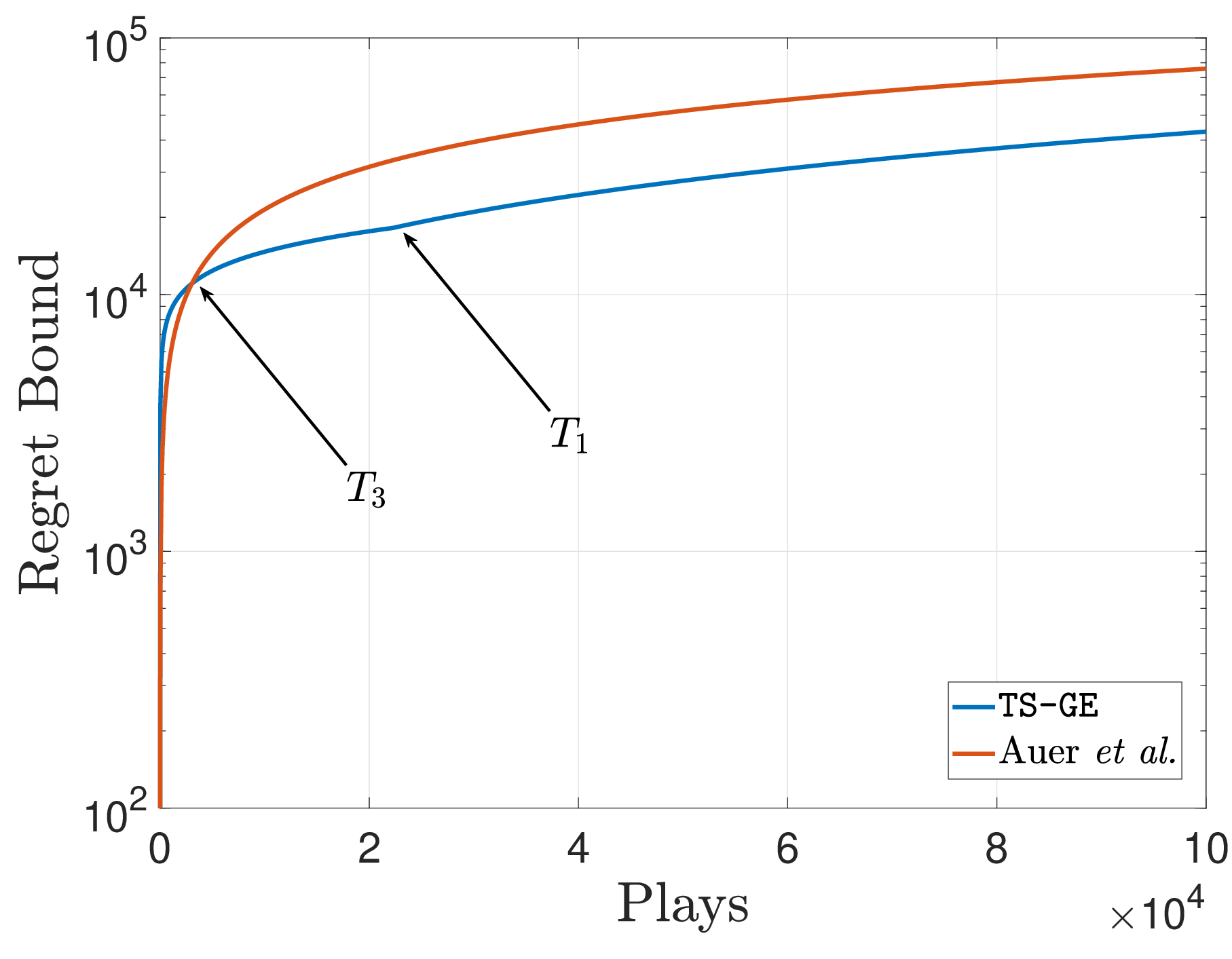}}
  \caption{Comparison of the regret bound of \texttt{TS-GE} with the best known regret bounds of \texttt{ADSWITCH} and \texttt{M-UCB} for different values of $K$}.
\end{figure*}

In this section, we discuss the derived regret bound of \texttt{TS-GE} with respect to the best known bound of \texttt{ADSWITCH}~\cite{auer2019adaptively} and \texttt{M-UCB}~\cite{cao2019nearly}, Let us define three time slots $T_1$, $T_2$. and $T_3$ as follows:
\begin{itemize}
    \item $T_1 \overset{\cdot}{=} t: N_{\rm C}(1 + \log_2 K) = t^{\frac{2}{5}}$. 
    \item $T_2, T_3 \overset{\cdot}{=} t: \mathcal{R}_{\texttt{TS-GE}}(t) = \sqrt{N_{\rm C} K t \log t}$, $T_2 \leq T_3$.
\end{itemize}
In other words, $T_2$ and $T_3$ are the time instants where the regret bound of \texttt{TS-GE} matches the bound of \texttt{ADSWITCH} or \texttt{M-UCB}. We compare the regret bounds for three different number of arms relevant for a massive \ac{IoT} setup: small - $K = 100$ arms, medium - $K = 500$ arms, and large - $K = 1000$ arms. We consider a time-horizon of $T = 1e5$ number of plays.

For $K = 500$, Fig. \ref{fig:comparison_mid} shows that there are specific regions where \texttt{TS-GE} outperforms \texttt{M-UCB}. Beyond $T_2$, \texttt{M-UCB} outperforms \texttt{TS-GE}. The point of interest for our discussion is the exact location of $T_2$ for different values of $K$. For $K = 100$, the value of $T_2$ is low (see Fig.~\ref{fig:comparison_small}) and the regret bound of \texttt{M-UCB} is lower than \texttt{TS-GE} for most part of the time-horizon. However, in case of $K = 1000$, the value of $T_2$ is beyond the time-horizon, and accordingly, beyond time step 5000, throughout the time frame of interest, \texttt{TS-GE} outperforms \texttt{M-UCB}. This highlights the fact that the time-period of interest in a specific application would dictate the choice of a particular algorithm.

\section{Case Studies in Wireless Networks}

\subsection{Case Study 1: Task Offloading in Edge-Computing}
\label{sec:CS1}
Let us consider a \ac{MEC} cluster consisting of multiple edge servers\footnote{The framework can also be implemented in dense cognitive radio networks with appropriate wireless propagation models.}. The set of servers in the cluster is denoted by $\mathcal{K}$. The \ac{MEC} cluster caters to two different classes of users - primary users and secondary users. The primary users are priority users that are guaranteed \ac{QoS} on admission. The association and admission of the primary users with the MEC servers are controlled centrally so as to provide performance guarantees to all the primary users. On the contrary, the secondary users opportunistically offload their tasks to selected servers based on their availability of compute resources. Unlike for the primary users, the secondary user requests can either be admitted or dropped depending on the current task load of the servers. Due to the dynamic traffic and mobility of the users, we consider that the primary user load on the servers changes with time. This dynamically alters the ability of the servers to admit new secondary user tasks. The availability of the servers is related by the {\it workload buffer} which characterizes the pending task load at a server as discussed next. 

\subsubsection{Temporal evolution of the workload buffer}
The central controller updates the primary user and server connectivity periodically in {\it epochs}. At the beginning of each epoch, the central controller updates the status of at most one server. In particular, for the selected server $a_i \in \mathcal{K}$, the controller admits new primary users with fresh requests and disconnects with the primary users that have already been served by the server. The selection of the server under consideration at each epoch follows the current server load, temporally dynamic traffic density, and channel conditions. In a given epoch, the controller may also decide to not update any server. The server selection policy for updates is unknown to the secondary users. Once the update period is over, it is followed by the service period for the primary and the secondary users simultaneously. The service period consists of multiple time-slots which lasts until the beginning of the next epoch. While each primary user is allotted a dedicated server with service guarantee, on the contrary, each secondary user chooses one or multiple servers during the service period to offload their computational task. In case multiple servers are selected by the secondary user, it partitions the task into segments of sub-tasks and sends to each corresponding server. Let the beginning of the $i-$th epoch be denoted by $t_{\rm E}(i) \in [T],$ where $i = 1, 2, 3, \ldots$. For an epoch $l$, we denote the workload buffer size for server $a_k \in \mathcal{K}$ as $B_{0,l}^{k}$. An admission control mechanism is assumed wherein a server $a_k$ accommodates a maximum workload buffer of $W_{k, \max}$. Let $m_l^k$ denote the number of admitted users in the server $a_k$ in epoch $l$. The offload rate for the primary users and the total service rate for the server $a_k$ is assumed to be $\eta$ cycles/second and $C_k$ cycles/second, respectively. In case the $j-$th server changes its state in the $l-$th epoch, the size of the workload buffers evolves during $t_{\rm E}(l) \leq t \leq t_{\rm E}(l)$ as
\begin{align}
    W_{k}(t) = 
    \begin{cases}
    W_{k}(t_{\rm E}(l)) + (m_{l,k} \cdot \eta - C_k)\, (t- t_{E}(l)) \quad k = j \\
    W_{k}(t_{\rm E}(l)) +  - C_k\, (t- t_{E}(l)) \quad k \neq j
    \end{cases}
\end{align}

\subsubsection{Strategy for the secondary users}
We model the task offloading process of a secondary-user in this network as the \ac{MAB} problem. Recall that the secondary users are not aware of the state changes in the servers. However, they have information about the epochs. We map the \texttt{TS-GE} framework developed in this paper to the strategy of a single secondary user\footnote{Consideration of collision among the requests of multiple secondary users at a single server is beyond the scope of this paper and will be treated in a future work.}. In particular, the service period is mapped to the \texttt{TS} phase of the algorithm. Then, the update and association period of the next epoch corresponds to the \ac{BP} and \ac{GE} of an episode.

Let a secondary user attempts to offload a task which takes $\delta$ cycles to process. Based on the workload buffer, the time required by server $a_k$ to generate the output is calculated as:
 \begin{equation}
     \zeta_{l,k} = \frac{W_k(t) + \delta}{C_k}
 \end{equation}
We consider that the offloading is successful if the process output is generated within a compute deadline $T_{\max}$ of the secondary user. Thus, the reward for selecting an arm $a_k$ in the \ac{TS} phase is modeled as $R_{a_k} = T_{\max} - \zeta_{l,k}(t)$ for Algorithm 1. In the \ac{BP} phase, the edge user partitions its compute task into $K$ equal sub-tasks and attempts to offload a sub-task in each edge server. Naturally, the reward obtained by the user in such a case is the sum of the rewards for each sub-task. This naturally blends into the \texttt{TS-GE} algorithm for change detection and server identification.

\subsubsection{Numerical Example}
We employ the \texttt{TS-GE} framework in a \ac{MEC} cluster with 128 servers. The maximum workload buffer is taken as 1 Giga-cycles, while in a given realization for a server $a_k$ the service rate is uniformly sampled from $\sim \mathcal{U}(2, 4)$ GHz. The value of $\eta$ is taken to be 8 Mbps and the task deadline is taken as 0.05 seconds. In order to highlight the change detection framework, we consider fixed change points across different realizations of the algorithms. Specifically, changes occur at episodes $E_i = 30, 60, 90, 120, 150$.
\begin{figure}
    \centering
    \includegraphics[width = 0.4\textwidth]{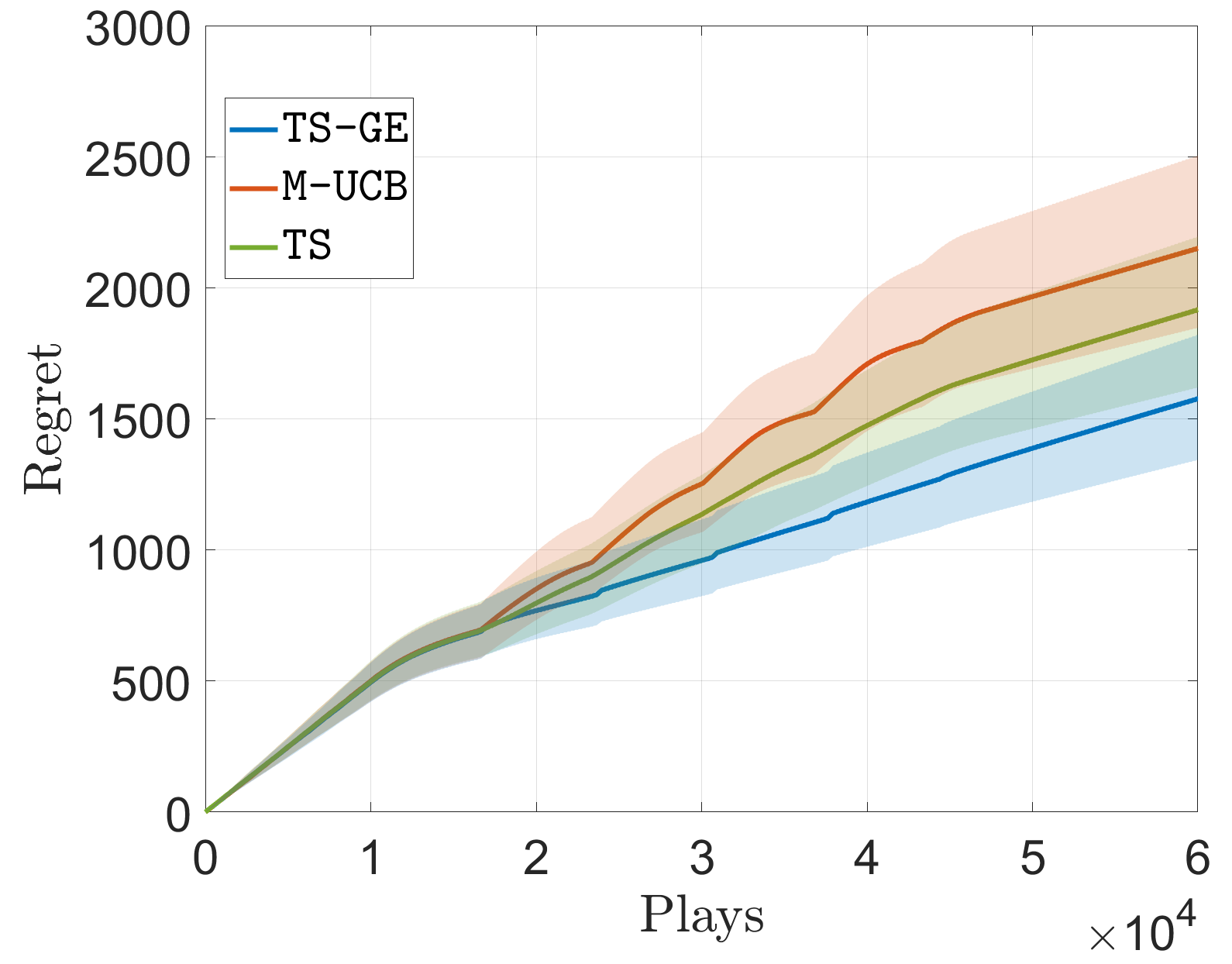}
    \caption{Comparison of the regret of \texttt{TS-GE} with \texttt{M-UCB} and \texttt{TS} for the \ac{MEC} server offloading problem.}
    \label{fig:actualregret}
\end{figure}
In Fig.~\ref{fig:actualregret}, we compare the classical \ac{TS} algorithm with \texttt{TS-GE} and \texttt{M-UCB}.  We see that \texttt{TS-GE} outperforms the others by detecting the exact arm that has undergone a change. Interestingly, \texttt{M-UCB} performs worse than \ac{TS}, mainly due to the fact that \texttt{M-UCB} flushes all the past (potentially relevant) rewards and restarts the exploration procedure once a system-level change is detected.


\subsection{Case Study 2: \ac{SWIPT} in an IIoT Network}
\label{sec:CS2}
In this section, we employ the \texttt{TS-GE} algorithm to evaluate an \ac{IIoT} network where a central controller transmits data-packets to the device with the best channel condition and simultaneously performs wireless power transfer to all the devices.

\subsubsection{Network Model}
Let us consider an \ac{IIoT} network consisting of a central wireless \ac{AP} and $K$ IoT devices. The set of the devices is denoted by $\mathcal{K}$. Typically an industrial environment deals with a large $K$ that represent multiple sensors and cyber-physical systems. The \ac{AP} provisions two wireless services in the network: i) periodic \ac{wpt} to all the IoT devices and ii) a unicast broadband data transmission to one selected IoT device\footnote{The unicast service is relevant in cases when the AP intends to select the best storage-enabled IoT device to transfer large files for caching at the edge. This may then be accessed by the other IoT devices, e.g., using device-to-device link. However, in this paper, we do not delve deeper into such an analysis.}.

Let the scenario of interest be modeled as a two-dimensional disk $\mathcal{B}(0,R)$ of radius $R$ centered around the origin similar to~\cite{ghatak2021stochastic}. The transmit power of the \ac{AP} is $P_{\rm t}$. The location of the devices is assumed to be uniform in $\mathcal{B}(0,R)$. Each AP-device link may be blocked by roaming blockages in the environment. The probability that a link of length $r$ is in \ac{LOS} is assumed to be $p_{\rm L}(r) = \exp(-\omega r)$~\cite{bai2014analysis}. Furthermore, note that due to the presence of a large number of metallic objects, an industrial scenario presents a dense scattering environment. Consequently, we assume that each transmission link experiences a fast-fading $h$ modeled as a Rayleigh distributed random variable with variance 1. Thus, The received power at an IoT device at a distance of $r$ from the \ac{AP} is given by $P_{\rm r}(r) =  K P_{\rm t} h r^{-\gamma}$ with a probability $p_{\rm L}(r)$. Here $K$ and $\gamma$ respectively are the path-loss coefficient and the path-loss exponent. The total transmission bandwidth is assumed to be $B$ which is orthogonally allotted to the users scheduled in one time-slot.

At each episode, the controller selects the device with the best channel conditions and executes information transfer in a sequence of time-slots using the \ac{TS} phase of the algorithm. The device-specific transmission can be facilitated by employing techniques such as beamforming. However we do not consider the details of such procedures.
The \ac{TS} phase information transfer is followed by joint power transfer to all the devices. This is mapped to the \ac{BP} phase of the algorithm. At the end of the \ac{BP} phase, the total energy harvested at all the devices at the end of the \ac{BP} phase is reported back to the controller. Using this total energy transfer report, the controller detects whether a change in the large-scale channel conditions has taken place. If so, then the controller probes multiple devices grouped together as per \texttt{TS-GE} to detect the device with the current best channel conditions.

\subsubsection{Performance Bounds using Stochastic Geometry}
Before proceeding with the evaluation of \texttt{TS-GE} in this network, let us first derive the upper bound on the statistical performance of data-rate. This will enable a comparison with not only an existing algorithm but also the performance limit. Since the location of the devices is assumed to be uniform across the factory floor, they form a realization of a \ac{BPP}. Additionally, due to the assumption that the blockage in each link is independent of each other, the IoT devices are either in LOS or NLOS state. The probability that at least one of the IoT devices is in LOS state is given by:
\begin{align}
B_{\rm L} =  \left[\int_0^R\exp(- \omega t) \frac{2t}{R} dt\right]^K  = \left(2\frac{1 - \exp\left(- \omega R^2\right) \left(\omega R^2 + 1\right)}{\omega^2 R^2}\right)^K \nonumber
\end{align}
The above expression follows similarly to~\cite{bai2014coverage}. On the same lines, the probability that at least one of the IoT devices is in NLOS state is given by
\begin{align}
B_{\rm N} &=  \left(R -  2\frac{1 - \exp\left(- \omega R^2\right) \left(\omega R^2 + 1\right)}{\omega^2 R^2}\right)^K \nonumber 
\end{align}

\subsubsection{Best-link transmission for information transfer}
Out of the possible IoT devices, the central controller selects the device with the best channel condition for information transfer. For that first, let us derive the distance distributions of the nearest LOS and NLOS devices.
\begin{lemma}
The distribution of the distance to the nearest LOS device, $r_{\rm L1}$ and the nearest NLOS device, $r_{\rm N1}$ are respectively:
\begin{align}
    \mathbb{P}\left(r_{\rm L1} \geq x\right) &= \frac{\left(x^2U_{\rm L}(x)\frac{R^2}{R^2 - x^2}\right)^{K+1} - 1}{x^2U_{\rm L}(x)\frac{R^2}{R^2 - x^2} - 1} \left(\frac{R^2 - x^2}{R^2}\right)^K \nonumber \\
    \mathbb{P}\left(r_{\rm N1} \geq x\right) &= \frac{\left(x^2U_{\rm N}(x)\frac{R^2}{R^2 - x^2}\right)^{K+1} - 1}{x^2U_{\rm N}(x)\frac{R^2}{R^2 - x^2} - 1} \left(\frac{R^2 - x^2}{R^2}\right)^K \nonumber \\
\end{align}
where,
\begin{align}
    U_{\rm L}(x) &= \frac{2\left(1 - \exp\left( - \omega x \left( \omega x + 1\right)\right)\right)}{\omega^2 x}, \;
    U_{\rm N}(x) = x - \frac{2\left(1 - \exp\left( - \omega x \left( \omega x + 1\right)\right)\right)}{\omega^2 x} \nonumber 
\end{align}
\end{lemma}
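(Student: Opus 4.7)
The plan is to exploit the binomial point process structure of the device locations: the $K$ IoT devices are i.i.d. uniform on $\mathcal{B}(0,R)$, and each is independently in LOS with probability $e^{-\omega t}$ given its distance $t$ from the origin. The event $\{r_{L1}\ge x\}$ is precisely the event that no device is simultaneously in LOS and inside the inner disk $\mathcal{B}(0,x)$, so the derivation reduces to a single per-device computation combined via independence.

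First, I would compute the single-device probability of lying inside $\mathcal{B}(0,x)$ and being in LOS. Using the distance density $f_r(t) = 2t/R^2$ of a point uniform on the disk,
\begin{equation*}
q_L(x) \;=\; \int_0^x e^{-\omega t}\,\frac{2t}{R^2}\,dt \;=\; \frac{2}{\omega^2 R^2}\Bigl(1 - e^{-\omega x}(1+\omega x)\Bigr),
\end{equation*}
after a single integration by parts. This identifies $q_L(x)$, up to the $R^2$ normalization, with the function $U_L(x)$ appearing in the statement. The parallel calculation with $1 - e^{-\omega t}$ in place of $e^{-\omega t}$ produces $q_N(x) = x^2/R^2 - q_L(x)$ and hence the NLOS counterpart expressible through $U_N(x)$.

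Second, I would decompose the event by conditioning on $N_x$, the number of devices that fall inside $\mathcal{B}(0,x)$. By uniformity, $N_x \sim \mathrm{Binomial}(K, x^2/R^2)$, and conditionally on $N_x = k$, the $k$ inner points are i.i.d. uniform on $\mathcal{B}(0,x)$ with conditional LOS probability $q_L(x)\,R^2/x^2$, while the $K-k$ outer points do not affect the event. This gives
\begin{equation*}
\mathbb{P}(r_{L1}\ge x) \;=\; \sum_{k=0}^{K}\binom{K}{k}\Bigl(\tfrac{x^2}{R^2}\Bigr)^{k}\Bigl(\tfrac{R^2-x^2}{R^2}\Bigr)^{K-k}\Bigl(1 - \tfrac{q_L(x)\,R^2}{x^2}\Bigr)^{k}.
\end{equation*}
After regrouping, the summand becomes a constant factor $((R^2-x^2)/R^2)^K$ multiplying the $k$-th power of a single quantity involving $x^2 U_L(x)\,R^2/(R^2-x^2)$; substituting the finite geometric series identity $\sum_{k=0}^{K} z^k = (z^{K+1}-1)/(z-1)$ then yields the stated closed form. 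The derivation of $\mathbb{P}(r_{N1}\ge x)$ proceeds identically with the LOS and NLOS roles swapped, which replaces $U_L$ by $U_N$ and leaves the outer-annulus factor untouched.

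The main obstacle I anticipate is the combinatorial bookkeeping in passing from the Binomial-conditioning sum to the pure geometric series in the statement: naively one gets $\binom{K}{k}$-weighted terms, and collapsing them into an unweighted geometric progression requires absorbing the binomial coefficients into the definitions of $U_L$ and $U_N$, or re-parametrising the sum via a Binomial-to-geometric identity. Once that rearrangement is in place, the remaining integration is elementary and the closed-form summation follows from the standard geometric series formula.
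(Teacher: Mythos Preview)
Your plan is essentially the paper's: both condition on the number $k$ of devices lying inside $\mathcal{B}(0,x)$, attach the weight $(x^2/R^2)^k\,((R^2-x^2)/R^2)^{K-k}$, multiply by the $k$-th power of the conditional LOS integral $\int_0^x e^{-\omega t}\,(2t/x^2)\,dt$, and sum over $k$. The single structural difference is exactly the obstacle you flag at the end: your sum carries the factor $\binom{K}{k}$, whereas the paper's void-probability expression does not. The paper writes
\[
\mathbb{P}(r_{L1}\ge x)=\sum_{k=0}^{K}\Bigl[\int_0^x e^{-\omega t}\,\tfrac{2t}{x^2}\,dt\Bigr]^{k}\Bigl(\tfrac{x^2}{R^2}\Bigr)^{k}\Bigl(\tfrac{R^2-x^2}{R^2}\Bigr)^{K-k},
\]
factors out $((R^2-x^2)/R^2)^K$, and sums the remaining pure geometric series $\sum_{k=0}^{K} z^k$ with $z=x^2 U_L(x)\,R^2/(R^2-x^2)$ to obtain the stated closed form; the NLOS case is declared analogous.

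The obstacle you anticipate is therefore not a bookkeeping wrinkle that some ``Binomial-to-geometric identity'' will dissolve. With $\binom{K}{k}$ present, your sum collapses by the binomial theorem to the single term $(1-q_L(x))^K$, which is \emph{not} the geometric-series expression in the lemma; there is no rearrangement that turns a binomial-weighted sum into an unweighted $\sum_k z^k$ with the same summand. To reproduce the lemma exactly as stated you must follow the paper and omit the $\binom{K}{k}$ factor (and take the LOS integral itself, not its complement $1-q_L(x)R^2/x^2$, inside the $k$-th power); after that the finite geometric summation is immediate.
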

\begin{proof}
Using the void probabilities (see \cite{stoyan2013stochastic}) we have:
\begin{align}
  \mathbb{P}\left(r_{\rm L1} \geq x\right) 
= \sum_{k = 0}^K\left[x^2U_{\rm L}(x)\left(\frac{R^2}{R^2 - x^2}\right)\right]^k \left(\frac{R^2 - x^2}{R^2}\right)^K \nonumber 
\end{align}
Evaluating the above series derives the result. The case for the NLOS device also follows in a similar manner.
\end{proof}


Across different realizations of the network, the best link experiences a fading-averaged downlink received power
\begin{align}
P_{\rm r} = 
\begin{cases}
KP_{\rm t}r_{\rm L1}^{-\gamma_{\rm L}}; &\text{with probability } \mathcal{P}_{\rm L} \\
KP_{\rm t}r_{\rm N1}^{-\gamma_{\rm N}}; &\text{with probability } 1 - \mathcal{P}_{\rm L}
\end{cases}
\label{eq:Pr}
\end{align}
Over a time-horizon of $T$ slots the throughput experienced by the system is
\begin{align}
    \mathcal{T} = \frac{N_{\rm l}T_{\rm TS}}{N_{\rm l} T_{\rm BP} + T_{\rm ETC} + N_{\rm C} T_{\rm GE}}\mathbb{E}\left[B\log_2\left(1 + \frac{P_{\rm r}}{N_0}\right)\right] 
\end{align}
where the expectation taken over $P_{\rm r}$ as per \eqref{eq:Pr}.

\subsubsection{Multicast/Broadcast transmission for power transfer}
Let us assume that in the multicast transmission phase, the \ac{AP} transmits data to a subset $\mathcal{J} \subset \mathcal{K}$ of the IoT devices, where $|\mathcal{J}| = N_J$. In this case, the available bandwidth $B$ is shared among the $N_J$ devices. The harvested power experienced by an IoT device of $\mathcal{J}$ is:
\begin{align}
    \mathcal{T}_j =  
    \begin{cases}
    \theta_e\frac{N_j}{K}{KPr_j^{-\gamma_{\rm L}}}\; \text{ with probability } \exp\left(- \omega r_j\right) \\
    \theta_e\frac{N_j}{K}\frac{BN_0}{N_j}\; \text{ with probability } 1 - \exp\left(- \omega r_j\right)
    \end{cases}
\end{align}
Accordingly, the network sum-energy is given by:
    $\mathcal{T}_T = \sum_{j \in \mathcal{J}} T_j,$
in one slot. In case of the \ac{BP} phase, naturally we have $\mathcal{J} = \mathcal{K}$.

\subsubsection{Numerical Example}
We run the \texttt{TS-GE} algorithm in our IIoT network for a total of 1000 seconds with time-slots of 10 ms~\cite{lin2018synchronization}. Additionally we assume slow moving blockages in which each 30 seconds the visibility state of exactly one IoT device changes from LOS to NLOS or vice-versa.
\begin{figure}
    \centering
    \includegraphics[width = 0.45\textwidth]{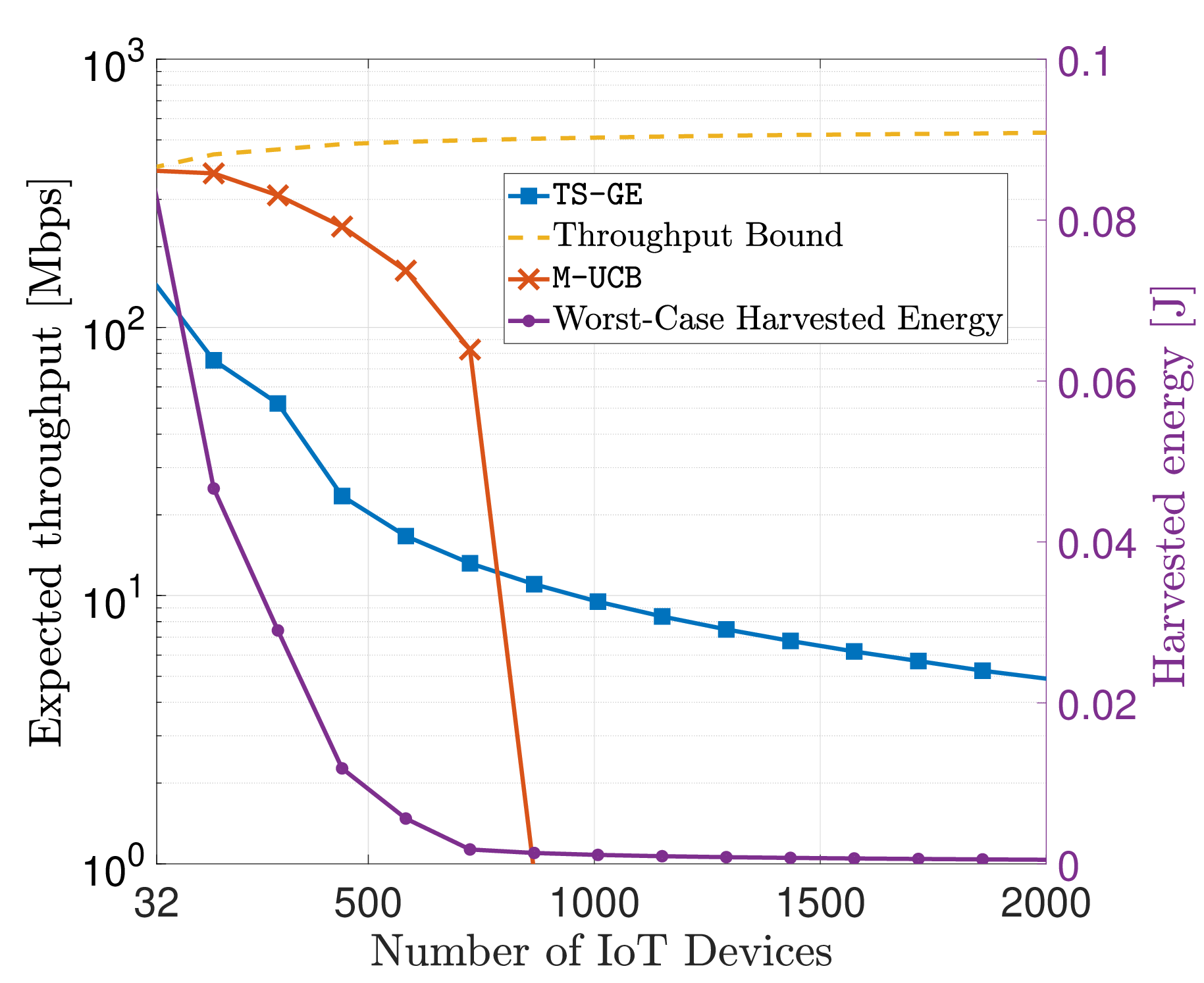}
    \caption{Expected bes-device throughput and worst-case harversted energy with \texttt{TS-GE}.}
    \label{fig:example}
\end{figure}
In Fig.~\ref{fig:example} we plot the average throughput of the information transfer phase (i.e., to the best device) as well as the minimum harvested energy in a device in the \ac{IIoT} network. We observe that in case of a fewer IoT devices, the \texttt{M-UCB} algorithm performs better than \texttt{TS-GE}. Indeed resetting all arms does not incur a large exploration loss in \texttt{M-UCB} in case $K$ is small. Additionally, \texttt{M-UCB} is not constrained by mandatory exploration. Accordingly, it enjoys a higher throughput as compared to \texttt{TS-GE} which needs to transfer energy all the devices.

Interestingly, as the number of devices in the network increases beyond a threshold, \texttt{TS-GE} outperforms \texttt{M-UCB} especially due to rapid changed state identification in the device. Naturally, as $K$ increases, the amount of time dedicated for energy transfer decreases. This is reflected in the reduced energy harvested in the worst device.

Several open problems are apparent. For example, the condition on that each episode can experience only one change can be too stringent to be applied meaningfully in contexts. Furthermore, change detection in the distributions rather than mean and extension to multiple players are indeed interesting directions of research which will be treated in a future work.

\section{Conclusion}
\label{sec:Con}
Existing multi-armed bandit algorithms that are tuned for non-stationary environments sample sub-optimal arms in a probabilistic manner in proportion to their sub-optimality gap. However, several applications require periodic mandatory probing of all the arms. To address this, we develop a novel algorithm called \texttt{TS-GE} which balances the regret guarantees of classical Thompson sampling with a periodic group-exploration phase which not only ensures the mandatory probing of all the arms but also acts as a mechanism to detect changes in the framework. We propose an index-based grouped probing strategy for fast identification of the changed arm. We show that the regret guarantees provided by \texttt{TS-GE} outperforms the state-of-the-art algorithms like \texttt{M-UCB} and \texttt{ADSWITCH} for several time-horizons, especially for a high number of arms. We demonstrated the efficacy of \texttt{TS-GE} in two wireless communication applications - edge offloading and an industrial IoT network designed for simultaneous wireless information and power transfer.

\bibliographystyle{ieeetr}
\bibliography{references}

\begin{thebibliography}{10}

\bibitem{9469216}
C.~Shi and C.~Shen, ``Multi-player multi-armed bandits with collision-dependent
  reward distributions,'' {\em IEEE Transactions on Signal Processing},
  vol.~69, pp.~4385--4402, 2021.

\bibitem{9113431}
C.~Gan, R.~Zhou, J.~Yang, and C.~Shen, ``Cost-aware cascading bandits,'' {\em
  IEEE Transactions on Signal Processing}, vol.~68, pp.~3692--3706, 2020.

\bibitem{slivkins2019introduction}
A.~Slivkins {\em et~al.}, ``Introduction to multi-armed bandits,'' {\em
  Foundations and Trends{\textregistered} in Machine Learning}, vol.~12,
  no.~1-2, pp.~1--286, 2019.

\bibitem{thompson1933likelihood}
W.~R. Thompson, ``On the likelihood that one unknown probability exceeds
  another in view of the evidence of two samples,'' {\em Biometrika}, vol.~25,
  no.~3/4, pp.~285--294, 1933.

\bibitem{gittins2011multi}
J.~Gittins, K.~Glazebrook, and R.~Weber, {\em Multi-armed bandit allocation
  indices}.
\newblock John Wiley \& Sons, 2011.

\bibitem{contal2013parallel}
E.~Contal, D.~Buffoni, A.~Robicquet, and N.~Vayatis, ``{Parallel Gaussian
  process optimization with upper confidence bound and pure exploration},'' in
  {\em Joint European Conference on Machine Learning and Knowledge Discovery in
  Databases}, pp.~225--240, Springer, 2013.

\bibitem{uprety2020reinforcement}
A.~Uprety and D.~B. Rawat, ``Reinforcement learning for iot security: A
  comprehensive survey,'' {\em IEEE Internet of Things Journal}, vol.~8,
  no.~11, pp.~8693--8706, 2020.

\bibitem{9767545}
S.~K. Singh, V.~S. Borkar, and G.~Kasbekar, ``User association in dense mmwave
  networks as restless bandits,'' {\em IEEE Transactions on Vehicular
  Technology}, pp.~1--1, 2022.

\bibitem{huh2020advancing}
J.~Huh and E.~C. Malthouse, ``Advancing computational advertising:
  Conceptualization of the field and future directions,'' {\em Journal of
  Advertising}, vol.~49, no.~4, pp.~367--376, 2020.

\bibitem{ghatak2021kolmogorovsmirnov}
G.~Ghatak, H.~Mohanty, and A.~U. Rahman, ``{Kolmogorov-Smirnov Test-Based
  Actively-Adaptive Thompson Sampling for Non-Stationary Bandits},'' {\em IEEE
  Transactions on Artificial Intelligence}, 2021.

\bibitem{auer2019adaptively}
P.~Auer, P.~Gajane, and R.~Ortner, ``Adaptively tracking the best bandit arm
  with an unknown number of distribution changes,'' in {\em Conference on
  Learning Theory}, pp.~138--158, 2019.

\bibitem{liu2022nonstationary}
Y.~Liu, B.~Van~Roy, and K.~Xu, ``Nonstationary bandit learning via predictive
  sampling,'' {\em arXiv preprint arXiv:2205.01970}, 2022.

\bibitem{ahmad2009multi}
S.~H.~A. Ahmad and M.~Liu, ``Multi-channel opportunistic access: A case of
  restless bandits with multiple plays,'' in {\em 2009 47th Annual Allerton
  Conference on Communication, Control, and Computing (Allerton)},
  pp.~1361--1368, IEEE, 2009.

\bibitem{liu2012learning}
H.~Liu, K.~Liu, and Q.~Zhao, ``Learning in a changing world: Restless
  multiarmed bandit with unknown dynamics,'' {\em IEEE Transactions on
  Information Theory}, vol.~59, no.~3, pp.~1902--1916, 2012.

\bibitem{mellor2013thompson}
J.~Mellor and J.~Shapiro, ``Thompson sampling in switching environments with
  bayesian online change detection,'' in {\em Artificial Intelligence and
  Statistics}, pp.~442--450, 2013.

\bibitem{hartland2006multi}
C.~Hartland, S.~Gelly, N.~Baskiotis, O.~Teytaud, and M.~Sebag, ``Multi-armed
  bandit, dynamic environments and meta-bandits,'' 2006.

\bibitem{srivastava2014surveillance}
V.~Srivastava, P.~Reverdy, and N.~E. Leonard, ``Surveillance in an abruptly
  changing world via multiarmed bandits,'' in {\em 53rd IEEE Conference on
  Decision and Control}, pp.~692--697, IEEE, 2014.

\bibitem{yu2009piecewise}
J.~Y. Yu and S.~Mannor, ``Piecewise-stationary bandit problems with side
  observations,'' in {\em Proceedings of the 26th annual international
  conference on machine learning}, pp.~1177--1184, 2009.

\bibitem{ghatak2020change}
G.~Ghatak, ``{A Change-Detection-Based Thompson Sampling Framework for
  Non-Stationary Bandits},'' {\em IEEE Transactions on Computers}, vol.~70,
  no.~10, pp.~1670--1676, 2020.

\bibitem{cao2019nearly}
Y.~Cao, Z.~Wen, B.~Kveton, and Y.~Xie, ``Nearly optimal adaptive procedure with
  change detection for piecewise-stationary bandit,'' in {\em The 22nd
  International Conference on Artificial Intelligence and Statistics},
  pp.~418--427, 2019.

\bibitem{anantharam1987asymptotically}
V.~Anantharam, P.~Varaiya, and J.~Walrand, ``Asymptotically efficient
  allocation rules for the multiarmed bandit problem with multiple plays-part
  i: Iid rewards,'' {\em IEEE Transactions on Automatic Control}, vol.~32,
  no.~11, pp.~968--976, 1987.

\bibitem{xia2016budgeted}
Y.~Xia, T.~Qin, W.~Ma, N.~Yu, and T.-Y. Liu, ``Budgeted multi-armed bandits
  with multiple plays.,'' in {\em IJCAI}, vol.~6, pp.~2210--2216, 2016.

\bibitem{youssef2021resource}
M.-J. Youssef, V.~V. Veeravalli, J.~Farah, C.~A. Nour, and C.~Douillard,
  ``Resource allocation in noma-based self-organizing networks using stochastic
  multi-armed bandits,'' {\em IEEE Transactions on Communications}, vol.~69,
  no.~9, pp.~6003--6017, 2021.

\bibitem{besson2018multi}
L.~Besson and E.~Kaufmann, ``Multi-player bandits revisited,'' in {\em
  Algorithmic Learning Theory}, pp.~56--92, PMLR, 2018.

\bibitem{wang2022multiple}
X.~Wang, H.~Xie, and J.~C. Lui, ``Multiple-play stochastic bandits with
  shareable finite-capacity arms,'' in {\em International Conference on Machine
  Learning}, pp.~23181--23212, PMLR, 2022.

\bibitem{zhou2019joint}
B.~Zhou and W.~Saad, ``Joint status sampling and updating for minimizing age of
  information in the internet of things,'' {\em IEEE Transactions on
  Communications}, vol.~67, no.~11, pp.~7468--7482, 2019.

\bibitem{stahlbuhk2021learning}
T.~Stahlbuhk, B.~Shrader, and E.~Modiano, ``Learning algorithms for minimizing
  queue length regret,'' {\em IEEE Transactions on Information Theory},
  vol.~67, no.~3, pp.~1759--1781, 2021.

\bibitem{9559999}
S.~Fatale, K.~Bhandari, U.~Narula, S.~Moharir, and M.~K. Hanawal, ``Regret of
  age-of-information bandits,'' {\em IEEE Transactions on Communications},
  vol.~70, no.~1, pp.~87--100, 2022.

\bibitem{burkardt2014truncated}
J.~Burkardt, ``The truncated normal distribution,'' {\em Department of
  Scientific Computing Website, Florida State University}, vol.~1, p.~35, 2014.

\bibitem{agrawal2012analysis}
S.~Agrawal and N.~Goyal, ``{Analysis of Thompson sampling for the multi-armed
  bandit problem},'' in {\em Conference on learning theory}, pp.~39--1, 2012.

\bibitem{hamming1950error}
R.~W. Hamming, ``Error detecting and error correcting codes,'' {\em The Bell
  system technical journal}, vol.~29, no.~2, pp.~147--160, 1950.

\bibitem{ghatak2021stochastic}
G.~Ghatak, S.~R. Khosravirad, and A.~De~Domenico, ``Stochastic geometry
  framework for ultra-reliable cooperative communications with random
  blockages,'' {\em IEEE Internet of Things Journal}, 2021.

\bibitem{bai2014analysis}
T.~Bai, R.~Vaze, and R.~W. Heath, ``{Analysis of Blockage Effects on Urban
  Cellular Networks},'' {\em IEEE Trans. Wireless Commun.}, vol.~13, no.~9,
  pp.~5070--5083, 2014.

\bibitem{bai2014coverage}
T.~Bai and R.~W. Heath, ``Coverage and rate analysis for millimeter-wave
  cellular networks,'' {\em IEEE Transactions on Wireless Communications},
  vol.~14, no.~2, pp.~1100--1114, 2014.

\bibitem{stoyan2013stochastic}
D.~Stoyan, W.~S. Kendall, S.~N. Chiu, and J.~Mecke, {\em Stochastic geometry
  and its applications}.
\newblock John Wiley \& Sons, 2013.

\bibitem{lin2018synchronization}
J.-C. Lin, ``{Synchronization requirements for 5G: An overview of standards and
  specifications for cellular networks},'' {\em IEEE Vehicular Technology
  Magazine}, vol.~13, no.~3, pp.~91--99, 2018.

\end{thebibliography}

\appendices

\section{ Proof of Lemma~\ref{lem:PMTS}
}
\label{app:MissTS}
Let the number of times the arm $a_i$ is played is $t_i^-$ times before $t_c$ and $t_i^+$ times after $t_c$. The test statistic (i.e., the parameter to be compared to $2\delta$) is simply a random variable $Z_{TS}$ -
\begin{align}
    &Z_{TS} = \frac{1}{Kn_{\rm ETC}} \sum_{q = (i - 1)n_{\rm ETC} + 1} ^ {in_{\rm ETC}} X_i^-(q) + \frac{1}{K} \sum_{a_j \neq a_i} \frac{1}{n_{\rm ETC}} 
    \sum_{q = (j - 1)n_{\rm ETC} + 1} ^ {jn_{\rm ETC}} X_j(q) + \frac{1}{(K) n_j(T')} \sum_{a_j \neq a_i} \sum_{q_j = T_{\rm ETC} + 1}^{T'}  \cdot \nonumber  \\
     &X_j(q) \mathbb{I}(a_{\texttt{TS-GE}}(q) = a_j)  + \frac{1}{t_i^- + t_i^+} \left[ \sum_{q} X_i^-(q) +\right.  
    \left.  \sum X_i^+(q) \right]  - \frac{1}{KT_{\rm BP}} \sum_{q = T'}^{T''} X_i^+(q) - \frac{1}{KT_{\rm BP}} \sum_{q = T'}^{T''} \sum_{a_j \neq a_i} X_j(q)
\end{align}
Here $T' = T_{\rm ETC} + (m-1) (T_{\rm TS} + T_{\rm BP}) + T_{\rm TS}$, $T'' = T_{\rm ETC} + m(T_{\rm TS} + T_{\rm BP})$, and $n_j(t)$ is the number to times the arm $a_j$ has been played until time $t$. Since all the arms except $a_i$ remain stationary, we have:
\begin{align}
    &\mathbb{P}\left(\Big|\frac{1}{K } \sum_{a_j \neq a_i} \frac{1}{n_{\rm ETC}} \sum_{q = (j - 1)n_{\rm ETC} + 1} ^ {jn_{\rm ETC}} X_j(q) +  \frac{1}{(K) n_j(T')} \right. 
    \left.\sum_{a_j \neq a_i} \sum_{q_j = T_{\rm ETC} + 1}^{T'} X_j(q) \mathbb{I}(a_{\texttt{TS-GE}}(q) = a_j)  +  \right. \nonumber \\
    & \left.  - \frac{1}{KT_{\rm BP}} \sum_{q = T'}^{T''} \sum_{a_j \neq a_i} X_j(q)\Big| \geq 2\delta \right) \leq \mathcal{O}\left(\frac{1}{T^2}\right)
\end{align}

Consequently, for the decision of change detection it is of interest to consider the following random variable instead:
\begin{align}
    Z_{TS}' = &\frac{1}{Kn_{\rm ETC}} \sum_{q = (i - 1)n_{\rm ETC} + 1} ^ {in_{\rm ETC}} X_i^-(q) +  \frac{1}{K(t_i^- + t_i^+)} \left[ \sum_{q} X_i^-(q) + \sum X_i^+(q) \right] - \frac{1}{KT_{\rm BP}} \sum_{q = T'}^{T''} X_i^+(q),
\end{align}
and compare it to a threshold of $2\delta$. Note that $Z'_{TS}$ is Gaussian distributed with mean $\mu_{Z'_{TS}} = \Delta_{{\rm C},i} + \frac{1}{t_i^- + t_i^+}\left(t_i^-\mu_i^- + t_i^+ \mu_I^+\right)$ and variance given by $\sigma_{Z'_{TS}}^2 = \frac{\sigma^2}{K^2}\left(\frac{1}{n_{\rm ETC}} + \frac{1}{t_i^- + t_i^+} + \frac{1}{T_{\rm BP}}\right)$. Consequently, there are two cases of interest:

{\bf Case 1 - $\Delta^C_i >0$:} This is the case where the mean of the arm $a_i$ increases from $\mu_i^-$ to $\mu_i^+$. Accordingly the missed detection probability can be written as:
\begin{align}
    \mathcal{P}_{\rm M}^{\rm TS} &= \mathbb{P}\left(|Z'_{TS}| \leq 2 \delta\right) \leq \mathcal{Q}\left[\frac{\mu_{Z'_{TS}} - 2\delta}{\sigma_{Z'_{TS}}}\right]  \overset{(a)}{\leq}  \mathcal{Q}\left[\frac{\Delta^C_i - 2\delta}{\frac{\sigma}{K}\sqrt{\frac{1}{n_{\rm ETC}} + \frac{1}{t_i^- + t_i^+} + \frac{1}{KT_{\rm BP}}}}\right] \nonumber \\
    & \overset{(b)}\leq  \mathcal{Q}\left[\frac{K\sqrt{n_{\rm ETC} +t_i^- + t_i^+ + T_{\rm BP}}}{3} \frac{\Delta^C_i - 2\delta}{\sigma}\right]  \overset{(c)}\leq \frac{1}{T}. \nonumber 
\end{align}
In the above $\mathcal{Q}(\cdot)$ is the Gaussian-Q function. The inequality (a) follows from the facts that $\frac{1}{t_i^- + t_i^+}\left(t_i^-\mu_i^- + t_i^+ \mu_I^+\right) \geq 0$ and the $\mathcal{Q}(\cdot)$ is a decreasing function. The step (b) follows from the AM $>$ HM inequality, while the step (c) follows from the assumption that $\Delta_{{\rm C},i} \geq 2\sigma$ and the inequality $Q(K . \sqrt{x^{(2/5)}}) \leq \frac{1}{x}$ for $K \geq 1$.

{\bf Case 2 - $\Delta_{{\rm C},i} \leq 0$:} This refers to the case where the mean of arm $a_i$ decreases from $\mu_i^-$ to $\mu_i^+$, i.e., $\mu_i^+ \leq \mu_i^-$. Accordingly, the missed detection probability follows similarly to the above $
    \mathcal{P}_{\rm M}^{\rm TS} = \mathbb{P}\left(|Z'_{TS}| \leq 2 \delta\right)  \leq \frac{1}{T}.$
\section{ Proof of Lemma~\ref{lem:PMBP}
}
\label{app:MissBP}
Let in the BP phase, the number of times all the arms are played simultaneously be $t_i^-$ times before $t_c$ and $t_i^+$ times after $t_c$. Given that other arms $a_j$ where $j \neq i$ have not changed, the test statistic (i.e., the parameter to be compared to $2\delta$) is simply a random variable $Z_{BP}$ similar to $Z_{TS}$. 
given by:
\begin{align}
    &Z_{BP} = \frac{1}{Kn_{\rm ETC}} \sum_{q = (i - 1)n_{\rm ETC} + 1} ^ {in_{\rm ETC}} X_i^-(q) + \frac{1}{K} \sum_{a_j \neq a_i} \frac{1}{n_{\rm ETC}} \sum_{q = (j - 1)n_{\rm ETC} + 1} ^ {jn_{\rm ETC}} X_j(q) +  \frac{1}{K n_j(T')} \sum_{a_j}\nonumber  \\
       &\sum_{q_j = T_{\rm ETC} + 1}^{T'}X_j(q) \mathbb{I}(a_{\texttt{TS-GE}}(q) = a_j)  +  \frac{1}{(t_i^- + t_i^+)K} \left[ \sum_{q} X_i^-(q) + \sum X_i^+(q) \right] - \frac{1}{KT_{\rm BP}} \sum_{q = T'}^{T''} \sum_{a_j \neq a_i} X_j(q)
\end{align}
Since all the arms except $a_i$ remain stationary, we have:
\begin{align}
    &\mathbb{P}\left(\Big|\frac{1}{K } \sum_{a_j \neq a_i} \frac{1}{n_{\rm ETC}} \sum_{q = (j - 1)n_{\rm ETC} + 1} ^ {jn_{\rm ETC}} X_j(q) +  \frac{1}{K n_j(T')}\right. \left.\sum_{a_j \neq a_i} \sum_{q_j = T_{\rm ETC} + 1}^{T'} X_j(q) \mathbb{I}(a_{\texttt{TS-GE}}(q) = a_j)  +  \right. \nonumber \\
    & \left.  - \frac{1}{KT_{\rm BP}} \sum_{q = T'}^{T''} \sum_{a_j \neq a_i} X_j(q)\Big| \geq 2\delta \right) \leq \mathcal{O}\left(\frac{1}{T^2}\right)
\end{align}
Consequently, for the decision of change detection it is of interest to consider the following random variable instead:
\begin{align}
    Z_{BP}' = &\frac{1}{Kn_{\rm ETC}} \sum_{q = (i - 1)n_{\rm ETC} + 1} ^ {in_{\rm ETC}} X_i^-(q) + \frac{1}{Kn_i(T')}\sum_{q} X_i^-(q)   \nonumber \\
    & - \frac{1}{K(t_i^- + t_i^+)} \left[ \sum_{q} X_i^-(q) + \sum X_i^+(q) \right]
\end{align}

Note that $Z'_{BP}$ is Gaussian distributed with mean $\mu_{Z'_{BP}} = \mu_i^- - \frac{1}{t_i^- + t_i^+}\left(t_i^-\mu_i^- + t_i^+ \mu_i^+\right)$ and variance given by $\sigma_{Z'_{TS}}^2 = \frac{\sigma^2}{K^2}\left(\frac{1}{n_{\rm ETC}} + \frac{1}{t_i^- + t_i^+} + \frac{1}{T_{\rm BP}}\right)$.

{\bf Case 1 - $\Delta_{{\rm C},i} > 4\delta$ and $t_i^- \leq \frac{T_{\rm BP}(\Delta_{{\rm C},i} - 4 \delta)}{\Delta_{{\rm C},i}}$:}

This case occurs with a \textit{high} probability. Due to the fact that for this case, we have $t_i^- \leq \frac{T_{\rm BP}(\Delta_{{\rm C},i} - 4 \delta)}{\Delta_{{\rm C},i}}$, i.e., $\mu_{Z'_{BP}} \geq 4 \delta$, thus, similar to the Lemma 2,
    $\mathcal{P}_{M | \text{Case 1}}^{\rm BP} \leq \frac{1}{T}$
Thus, we have
  $\mathcal{P}_{M,\text{Case 1}}^{\rm BP} = \mathcal{P}_{M | \text{Case 1}}^{\rm BP} \cdot \mathcal{P}_\text{Case 1} \leq \frac{1}{T}.$

{\bf Case 2 - $\Delta_{{\rm C},i} > 4\delta$ and $t_i^- > \frac{T_{\rm BP}(\Delta_{{\rm C},i} - 4 \delta)}{\Delta_{{\rm C},i}}$:} Here we have $\mu_{Z'_{BP}} < 4 \delta$, and accordingly, the probability of missed detection is high. However, let us first observe the probability that the change occurs such that $t_i^- > \frac{T_{\rm BP}(\Delta_{{\rm C},i} - 4 \delta)}{\Delta_{{\rm C},i}}$. We have:
\begin{align}
    \mathbb{P}\left(\text{Case 2}\right) &= \mathbb{P}\left(t_i^- > \frac{T_{\rm BP}(\Delta_{{\rm C},i} - 4 \delta)}{\Delta_{{\rm C},i}}\right) = \mathbb{P}\left(\frac{t_i^+}{t_i^-} \geq \frac{4\delta}{\Delta_{{\rm C},i} - 4 \delta}\right) \overset{(a)}\leq \frac{1}{T} \nonumber 
\end{align}
where the step (a) is due to Assumption \ref{ass:pC}.

{\bf Case 3 - $\Delta_{{\rm C},i} < 4\delta$ and $t_i^- \leq \frac{T_{\rm BP}(-\Delta_{{\rm C},i} - 4 \delta)}{\Delta_{{\rm C},i}}$:} This case is similar to Case 1 and hence we skip the detailed proof for brevity. In summary, similar to Case 1, for Case 3 the probability that the change occurs at a time step such that $t_i^- \leq \frac{T_{\rm BP}(-\Delta_{{\rm C},i} - 4 \delta)}{\Delta_{{\rm C},i}}$ holds is high. However, the probability of missed detection is bounded by $\frac{1}{T}$.

{\bf Case 4 - $\Delta_{{\rm C},i} < 4\delta$ and $t_i^- > \frac{T_{\rm BP}(-\Delta_{{\rm C},i} - 4 \delta)}{\Delta_{{\rm C},i}}$:} This is similar to Case 2, wherein the probability of missed detection is high, while due to the Assumption \ref{ass:pC}, the occurrence of the change such that the condition $t_i^- > \frac{T_{\rm BP}(-\Delta_{{\rm C},i} - 4 \delta)}{\Delta_{{\rm C},i}}$ holds is bounded by $\frac{1}{T}$.

\section{Proof of Theorem~\ref{theo:Regret}}
\label{app:Regret}
Recall that Each episode either experiences one change or no changes.

{\bf Regret in case of no change:}
The number of such episodes is $N_{\rm l} - N_{\rm C}$. Each such episode experiences a mandatory regret bounded by:
\begin{align}
    \mathcal{R}_{\text{no change}}^1(T_{\rm l}) \leq \underbrace{\mathcal{O}\left[\log \left(\sqrt{T} - T^{\frac{2}{5}}\right)\right]}_{A} + \underbrace{\mathcal{O}\left(T^{\frac{2}{5}}\right)}_{B}, \nonumber
\end{align}
where the term $A$ is due to the TS phase (e.g., see~\cite{agrawal2012analysis}) and the term B is due to the BP phase, where we assume linear regret as the worst case. In case of a false alarm, the algorithm subsequently experiences worst-case regret in all the subsequent phases. This occurs with a probability of $\mathcal{P}_{FA}$, and hence its contribution to the overall regret is
  $\mathcal{R}_{\text{no change}}^2(T_{\rm l}) \leq \mathcal{P}_{FA} \Delta_{max} T \overset{(a)}{\leq} K_1$,
where $\Delta_{\rm max}$ is the maximum difference between the means of the arms at any given instant.
The step (a) follows from \eqref{eq:pfa}. Thus, overall, for the case of no change, the regret is
    $\mathcal{R}_{\text{no change}}(T_{\rm l}) \leq  \mathcal{O}\left[\log \left(\sqrt{T} - T^{\frac{2}{5}}\right)\right] + \mathcal{O}\left(T^{\frac{2}{5}}\right) + K_1.$

{\bf Regret in case of change:}
The number of such episodes is $N_{\rm C}$. Each such episode experiences a regret bounded by
$\mathcal{R}_{\text{change}}^1 \leq \mathcal{O}\left(\Delta_{max} \sqrt{T}\right).$
where we assume that in case a change occurs, the player suffers a worst-case linear regret for the rest of the episode. However, in case of missed detection, the algorithm experiences worst-case regret in all the subsequent phases. This occurs with a probability $\mathcal{P}_{\rm M} = p_{\rm C}^{\rm TS} \mathcal{P}_{M}^{\rm TS} + p_{\rm C}^{\rm BP}\mathcal{P}_{\rm M}^{\rm BP}$ which varies as $\frac{1}{T}$ and hence its contribution to the overall regret is $\mathcal{R}_{\text{change}}^2 \leq \mathcal{P}_{\rm M} \Delta_{max} T = K_2$.
Thus, overall, for the case of no change, the regret    $\mathcal{R}_{change} \leq \mathcal{O}\left(\Delta_{max} \sqrt{T}\right) + K_2.$
Using the above development, we can bound the regret of \texttt{TS-GE} as follows:
\begin{align}
    \mathcal{R}(T) &=  \mathcal{R}_{ETC}\left(T_{\rm ETC}\right) + \sum_{i = 1}^{N_{\rm l}} \mathcal{R}_i(T_{\rm l}) \nonumber \\ 
   &\leq \mathcal{O}\left(K\log T\right) + (N_{\rm l} - N_{\rm C})\mathcal{R}_{\text{no change}} + N_{\rm C} \mathcal{R}_{\text{change}} \nonumber \\
   &\leq\mathcal{O}\left( K\log T + \sqrt{T}\left[\max\{N_{\rm C}\left(1 + \log K\right), T^{\frac{2}{5}}\}\right]\right)
\end{align}

\end{document}